\PassOptionsToPackage{table}{xcolor}
\documentclass[sigconf,nonacm,screen,balance=false]{acmart}

\usepackage{booktabs}
\usepackage{multirow}
\usepackage{enumitem}
\usepackage{mathtools}
\usepackage{algorithm}
\usepackage{algorithmic}
\usepackage[capitalize,noabbrev]{cleveref}

\theoremstyle{acmdefinition}
\newtheorem{assumption}{Assumption}

\newcommand{\vx}{\mathbf{x}}

\newcommand{\cC}{\mathcal{C}}      %
\newcommand{\E}{\mathbb{E}}

\newcommand{\Regret}{\mathrm{Regret}}
\newcommand{\Div}{\mathrm{D}}
\DeclareMathOperator*{\argmin}{arg\,min}

\crefname{assumption}{Asm.}{Asms.}\Crefname{assumption}{Asm.}{Asms.}
\crefname{proposition}{Prop.}{Props.}\Crefname{proposition}{Prop.}{Props.}
\crefname{theorem}{Thm.}{Thms.}\Crefname{theorem}{Thm.}{Thms.}
\crefname{lemma}{Lem.}{Lems.}\Crefname{lemma}{Lem.}{Lems.}
\crefname{corollary}{Cor.}{Cors.}\Crefname{corollary}{Cor.}{Cors.}
\crefname{appendix}{App.}{Apps.}\Crefname{appendix}{App.}{Apps.}
\crefname{section}{Sec.}{Secs.}\Crefname{section}{Sec.}{Secs.}
\crefname{subsection}{Sec.}{Secs.}\Crefname{subsection}{Sec.}{Secs.}
\crefname{figure}{Fig.}{Figs.}\Crefname{figure}{Fig.}{Figs.}
\crefname{table}{Tab.}{Tabs.}\Crefname{table}{Tab.}{Tabs.}
\crefname{subappendix}{App.}{Apps.}\Crefname{subappendix}{App.}{Apps.}
\crefname{subsubsection}{Sec.}{Secs.}\Crefname{subsubsection}{Sec.}{Secs.}
\crefname{algorithm}{Alg.}{Algs.}\Crefname{algorithm}{Alg.}{Algs.}
\crefname{equation}{Eq.}{Eqs.}\Crefname{equation}{Eq.}{Eqs.}
\newcommand{\method}{\textsc{MICE}}

\definecolor{rankA}{HTML}{F5E2A8}\definecolor{rankB}{HTML}{F9EDC9}\definecolor{rankC}{HTML}{FCF6E6}
\newcommand{\first}[1]{\cellcolor{rankA}\textbf{#1}}
\newcommand{\second}[1]{\cellcolor{rankB}\underline{#1}}
\newcommand{\third}[1]{\cellcolor{rankC}\textit{#1}}

\setcopyright{none}
\renewcommand\footnotetextcopyrightpermission[1]{}
\acmISBN{}\acmDOI{}

\begin{document}

\title[Recall or Relearn?]{Recall or Relearn? A Mixture of In-Context Experts for Recurring Concept Drift}

\author{Zhangkai Wu}
\affiliation{\institution{University of Sydney}\city{Sydney}\country{Australia}}
\email{zhangkai.wu@sydney.edu.au}
\author{Xu Zhang}
\affiliation{\institution{Macquarie University}\city{Sydney}\country{Australia}}
\email{xu.zhang12@hdr.mq.edu.au}
\author{Zhihong Cui}
\affiliation{\department{Department of Informatics}\institution{University of Oslo}\city{Oslo}\country{Norway}}
\email{zhihongc@ifi.uio.no}
\author{Yiqi Wang}
\affiliation{\institution{University of Southern Queensland}\city{Toowoomba}\country{Australia}}
\email{Yiqi.Wang@unisq.edu.au}
\author{Yi Zhang}
\affiliation{\institution{University of Technology Sydney}\city{Sydney}\country{Australia}}
\email{Yi.Zhang@uts.edu.au}
\renewcommand{\shortauthors}{Wu et al.}

\begin{abstract}
Tabular foundation models (TFMs) are being applied to Web data streams such as network intrusion detection, where
they adapt by updating the labelled rows of their context without retraining. When a past concept returns after
leaving the context, a TFM faces a \emph{recall-or-relearn decision} between restoring stored rows of the concept and
relearning it from new labels. We propose \method{} (\textbf{M}ixture of \textbf{I}n-\textbf{C}ontext
\textbf{E}xperts), a training-free learner that cuts the stream into segments at detected changes. It merges a closed
segment into a stored context when the TFM's estimated additional error on the segment is within a tolerance, under
a capacity limit on stored contexts. A fast level weights the stored contexts and
first-in-first-out (FIFO) windows by the log-loss of their predictions on the last labelled batch. We show that, with
pure stored contexts and a margin condition, this fast level gains over relearning, in expected accuracy, the
normalised positive area between the stored context's accuracy and the TFM's learning curve, up to a first-batch
term. A slow level mixes the fast level's prediction with that of the FIFO window of the last labelled rows and
bounds the discounted half-Brier regret of \method{} against this window at every time. Learning curves measured
beforehand predict the ordering of the observed recall gains. \method{} outperforms reset, which empties the context at detected
drifts, on grid streams with random boundary phases. Merging keeps fewer experts than pools that store every segment,
and with fixed segmentation a time-matched snapshot pool matches \method{} on the shifted grid. On real streams,
\method{} has the highest mean accuracy of all policies we compare.
\end{abstract}

\keywords{web mining, concept drift, recurring concepts, tabular foundation models, in-context learning, data streams, online
learning, mixture of experts}

\maketitle
\parfillskip=0pt plus 0.5\columnwidth\relax
\raggedbottom
\setlength{\abovedisplayskip}{4pt plus 1pt}\setlength{\belowdisplayskip}{4pt plus 1pt}%
\setlength{\abovedisplayshortskip}{2pt plus 1pt}\setlength{\belowdisplayshortskip}{4pt plus 1pt}%
\setlist{topsep=2pt,itemsep=1pt}

\section{Introduction}
\label{sec:intro}

\begin{figure}[t]
  \centering
  \includegraphics[width=\columnwidth]{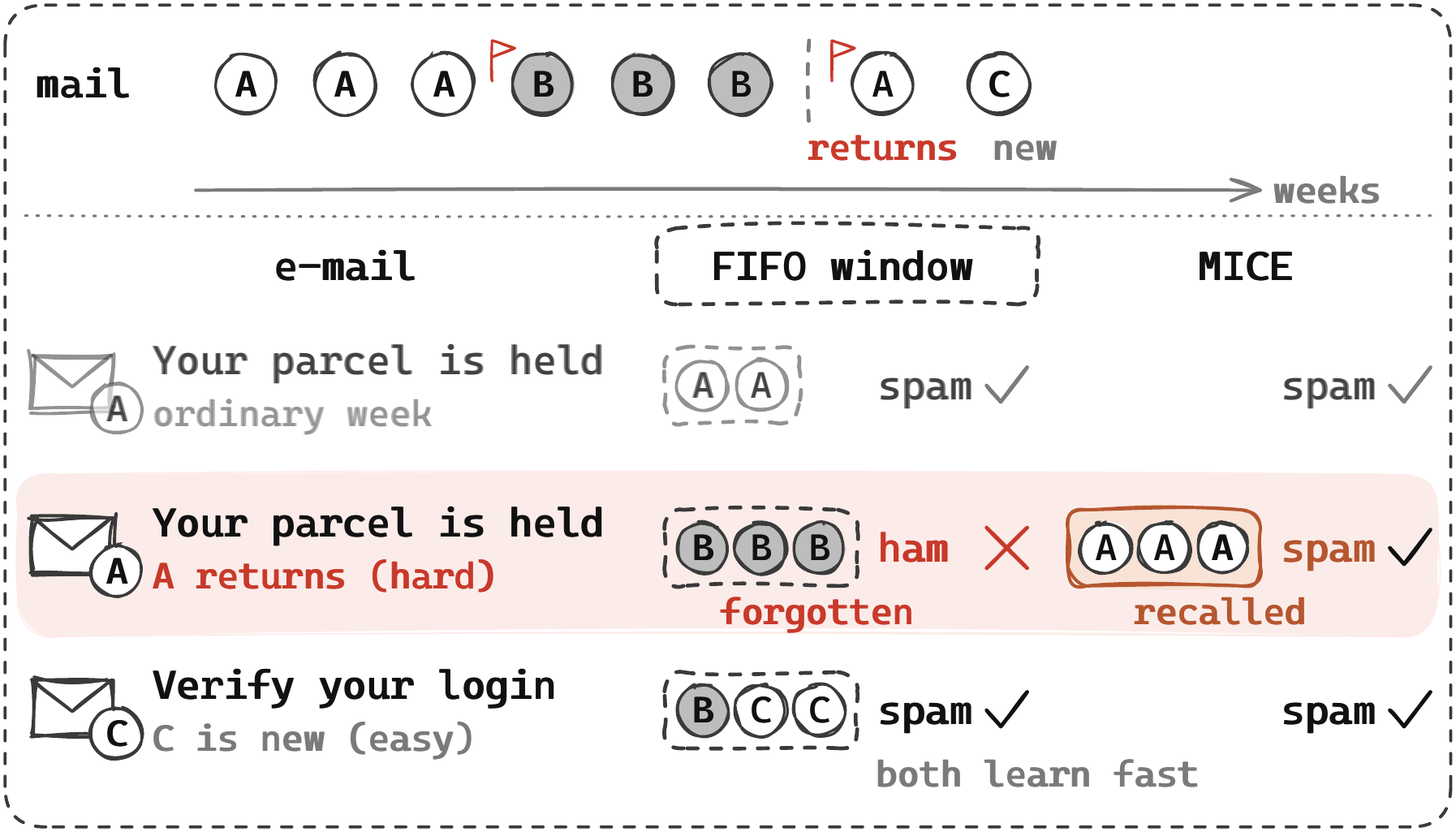}
  \caption{A frozen TFM filters spam from recent labelled mail, its context. Red flags mark detected changes. When
  campaign A returns, the FIFO window holds only B and predicts ham. \method{} recalls A's stored context and
  classifies A's mail as spam. Here, recall avoids relearning the hard campaign A. Both methods
  learn the new, easy campaign C quickly. Schematic.}
  \Description{The schematic mail timeline shows campaign A returning after B. A new campaign C follows the returning
  A. Red flags mark detected changes. Below the timeline, three rows compare the FIFO window with MICE. In an ordinary
  week both classify A's mail as spam. When A returns, the FIFO window holds only B's mail and predicts ham. MICE
  recalls a stored context of A and predicts spam. Both methods learn the new campaign C fast and predict spam.}
  \label{fig:problem}
\end{figure}

Web classification tasks such as fake news detection \citep{zhang2025fakenews}, time-series anomaly detection
\citep{wu2026wavae} and network intrusion detection receive their data as streams. Tabular foundation models (TFMs) have been applied to intrusion detection
\citep{garcia2025cybersecurity} and are being extended to data streams \citep{lourenco2026streams}. On a stream, a
TFM such as TabPFN \citep{hollmann2025tabpfn} adapts by updating the labelled rows of its context, without
retraining. A model may need new labels to rebuild a useful context when a returning pattern's rows have been
discarded. Recurring concepts therefore pose a \textbf{recall-or-relearn decision}: whether to restore stored rows or
rebuild the context from arriving labels. \Cref{fig:problem} illustrates the decision with a returning spam
campaign. Each batch is predicted before its labels arrive one batch later, and neither concept identities nor
change points are observed.

The first-in-first-out (FIFO) window follows recent data but discards the rows of past concepts, the contextual
knowledge loss of \cref{fig:intro}a. Keeping old rows alone does not repair this loss, because a context that
straddles an unknown boundary mixes conflicting rules and causes the conceptual interference of \cref{fig:intro}b. A
pure stored context can recover the predictions of a returning concept at once, and its benefit depends on how
quickly the TFM could learn the concept again. In our controlled tests, some concepts need 50 rows and others more
than 1000. For the former, one new batch can make the stored context redundant. For the latter, recall can avoid a
long recovery, as the learning curves in \cref{fig:intro}c illustrate. This difference makes the right answer to the
recall-or-relearn decision differ from concept to concept.

\begin{figure*}[t]
  \centering
  \includegraphics[width=\textwidth]{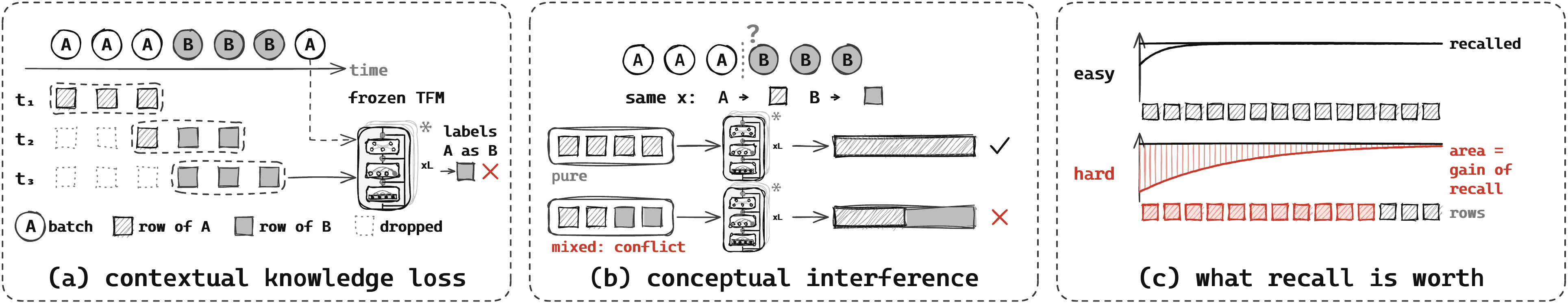}
  \caption{Three aspects of the recall-or-relearn decision. A and B are two concepts of one stream. (a) When A
  returns, the FIFO window has discarded A's rows and the TFM predicts A's batch with B's rule. (b) A context that
  straddles an unknown boundary mixes conflicting rules. (c) What recall is worth: the TFM relearns the easy concept
  from a few rows but the hard one only from many, and the red rows mark the relearning that recall avoids.
  Schematic. \Cref{sec:diag} measures each aspect. Under the purity and margin conditions of \cref{prop:recall}, the
  fast level's expected accuracy gain over relearning equals the normalised positive area in (c), up to a
  first-batch term.}
  \Description{Panel (a) shows a FIFO window discarding the rows of a concept that later returns. Panel (b)
  contrasts the predictions from a pure and a mixed context. Panel (c) compares the learning curves of an easy and a
  hard concept with stored-context accuracy, with the recall area shaded for the hard concept.}
  \label{fig:intro}
\end{figure*}

Existing stream policies either keep rows without grouping them by concept or reuse trained models. FIFO windows already make TFMs competitive with
adaptive tree ensembles \citep{lourenco2026streams}, and a prior trained on drifting causal models improves temporal
extrapolation \citep{helli2024drift}. Reset, which empties the context when the drift detection
method (DDM) fires \citep{gama2004ddm}, discards outdated rows but rebuilds the context even when a concept
has appeared before. The large tabular model (LTM) baseline of \citet{lourenco2026streams} pairs a short-term FIFO
buffer with a class-balanced long-term memory, without grouping rows by concept. Recurring-concept learners instead reuse trained
models \citep{goncalves2013rcd,gama2014recurrent,elwell2011nse,kolter2007dwm}. For a frozen TFM, a stored context can
serve as an expert directly. The challenge is to construct useful contexts from an unsegmented stream and decide
when their predictions deserve more weight than those of recent data.

Consider a fast level that weights stored contexts and recent windows by their loss on the last labelled batch.
Under pure-context and margin conditions, its expected accuracy gain over a learner that resets at concept boundaries
equals the area between stored-context accuracy and the relearning curve, up to a first-batch term. The area is large
when relearning is slow and vanishes when one batch already reaches stored-context accuracy. Learning curves measured
beforehand therefore predict which concepts offer the most benefit from recall. The online method needs no learning
curve at run time, because it selects contexts from their observed losses.

\method{} (\textbf{M}ixture of \textbf{I}n-\textbf{C}ontext \textbf{E}xperts) maintains stored contexts alongside
FIFO windows and combines their predictions using exponential weights \citep{littlestone1994weighted}.
\emph{Memory construction} closes segments at detected changes and merges them when their recall regret, the extra
0-1 error that the TFM makes on a segment when conditioned on a stored context instead of on the segment itself, is
small. This preserves reusable concepts while limiting the pool. \emph{Fast selection} weights the experts by the
log-loss of their predictions on the last labelled batch. When one expert leads every labelled batch of a
single-concept block by a log-loss margin, the regret of the fast level over the block is at most one batch of
clipped loss plus a term exponentially small in the margin. \emph{Slow aggregation} mixes the fast level's prediction
with the prediction of FIFO, weighted by their discounted half-Brier losses. It keeps discounted half-Brier regret
against FIFO within $2\ln2$ at every time, for any sequence of labels and expert predictions, which provides a loss
guarantee when following the pool does not yield a clear advantage.

We summarise our contributions as follows.
\begin{itemize}[leftmargin=*,topsep=2pt,itemsep=1pt]
  \item We formulate the recall-or-relearn decision for TFMs on streams with recurring concepts and measure its three
  aspects with a context stress test.
  \item We prove that, under pure-context and margin conditions, the expected accuracy gain of the fast level over
  relearning is a normalised learning-curve area up to a first-batch term, which lets learning curves measured before
  deployment predict where recall pays.
  \item We propose \method{}, a training-free mixture of in-context experts whose fast level selects stored contexts
  from recent losses and whose slow level keeps discounted half-Brier regret against FIFO at most $2\ln2$ at every
  time.
  \item We show that, with detected segmentation, \method{} is above reset on 70 of 72 grid streams with random
  boundary phases. On 29 real streams, where concepts are unobserved and experts often have similar losses, it has the
  highest mean accuracy of all policies and is above FIFO on 24.
\end{itemize}

\section{Related Work}
\label{sec:related}

Recurring-concept methods reuse trained models, whose knowledge of past concepts lives in weights that cost training to obtain. Drift detection and adaptation are surveyed by \citet{gama2014survey} and
\citet{lu2019learning}, and \citet{cabrera2025evolving} treat drift, forgetting and adaptation together.  \citet{goncalves2013rcd} select a stored model by statistical
tests. \citet{gama2014recurrent} learn a meta-model of each model's domain. Weighted ensembles reweight their members
as the stream changes \citep{elwell2011nse,kolter2007dwm}, and density-harmonized gradient descent learns imbalanced
streams online from actively queried labels \citep{huang2026density}. Dynamic TMoE adds and prunes trained experts at detected
regime shifts \citep{zhu2026dynamictmoe}, and drift-aware collaborative assistance combines trained experts across
heterogeneous streams \citep{yu2026drift}.  Online learning with experts tracks the best expert
through tracking and adaptive regret \citep{hazan2009efficient}, whatever the experts are, and \method{} uses this
view for contexts.

Prior-data fitted networks (PFNs) approximate posterior prediction from a labelled context
\citep{muller2022pfn,nagler2023statistical}, and TabPFN \citep{hollmann2025tabpfn}, TabICL \citep{qu2025tabicl} and
TabICLv2 \citep{qu2026tabiclv2} are TFMs of this kind with fixed weights. On streams, Drift-Resilient TabPFN
\citep{helli2024drift} retrains the prior to extrapolate temporal trends. The LTM baseline of
\citet{lourenco2026streams}, a TFM with a short-term FIFO buffer and a class-balanced long-term memory, outperforms
incremental ensembles. LTM does not recall a returning concept as a whole, because its two buffers do not group rows
by concept.

Recalling a stored context for a new interval needs a criterion for whether one context can serve another concept.
The $\mathcal H$-divergence \citep{bendavid2010theory} and the discrepancy distance \citep{mansour2009domain} measure
how well a source serves a target through the hypotheses it induces. Decision-based divergences
\citep{zhao2022comparing} define a distance between distributions by the decisions that their difference affects.
These divergences do not say when recalling a stored context beats relearning from the rows of the current concept,
because they compare distributions for a fixed hypothesis class. Our analysis uses learning curves to predict when
recall pays, and \method{} estimates the cost of reuse from the TFM's prediction errors.

\section{Setting and the Value of Context Memory}
\label{sec:def}

Our setting keeps the TFM frozen and lets a \emph{context policy} choose its rows, on a stream whose batches of $B$ rows are labelled one batch late. The TFM $f_\Theta(\cdot\mid\vx;\cC)$ predicts every batch from a context $\cC$ of at most $M$ labelled rows. The stream is a sequence of intervals $I_1,I_2,\dots$ with
distributions $P_1,P_2,\dots$ over $(\vx,y)$. Concept drift occurs between intervals in the sense of
\citet{lu2019learning}.

\begin{definition}[Recurring in-context concept drift]\label{def:recur}
Let $\cC_k$ be a context of rows of interval $I_k$ and $f_\Theta(\cdot\,;\cC_k)$ the $k$-th \emph{in-context expert}.
The drift into interval $I_{K+1}$ is \emph{recurring} if $\min_{k\le K}\Div_T(P_{K+1}\,|\,P_k)\le\delta$, where
$\Div_T(Q\,|\,P)$ is the population risk on $Q$ of the TFM conditioned on a context of $P$ minus that conditioned
on a context of $Q$, the recall regret of \cref{def:excess}, and
\emph{new} otherwise.
\end{definition}

The definition is model-oriented: a drift is recurring when some stored context serves the new interval with
additional risk at most $\delta$ relative to a context from that interval. Three policies are the reference points of
this paper. \emph{FIFO} conditions the TFM on the first-in-first-out window of the last $M$ labelled rows.
\emph{Reset} empties this window when DDM \citep{gama2004ddm} fires. The \emph{window ensemble} mixes FIFO windows of
several lengths. None of them keeps a library of past concepts.

\begin{figure*}[t]
  \centering
  \includegraphics[width=\textwidth]{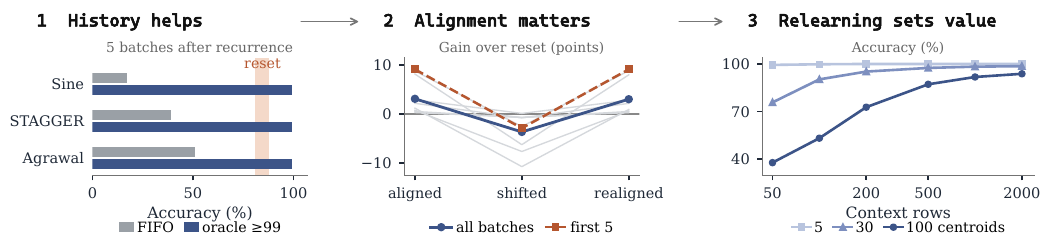}
  \caption{Context stress test. (1) Accuracy in the first five batches after a recurrence: FIFO (gray bars) and the
  oracle-context lower bound of 99\% (blue bars), with the reset range of 81--88\% shaded. (2) Gain of stored contexts
  over reset with aligned, shifted and realigned segments, over all batches (blue) and the first five batches after a
  recurrence (dashed orange), with single streams as pale traces. (3) Learning curves of concepts with 5, 30 and 100
  centroids.}
  \Description{Three connected diagnostic plots: paired horizontal bars compare FIFO and oracle contexts with a reset reference band, phase curves show loss and recovery of gains as segment alignment changes, and learning curves show accuracy versus context size for three concept complexities.}
  \label{fig:diag}
\end{figure*}

\label{sec:diag}The context stress test runs three tests with frozen TabPFN v2, which \cref{fig:diag} shows and
\cref{app:diag} specifies. The \emph{revisit test} shows that FIFO forgets a returning concept. In the five batches
after the return, FIFO reaches 0.17 to 0.51 accuracy on the Sine \citep{gama2004ddm}, STAGGER
\citep{schlimmer1986stagger} and Agrawal \citep{agrawal1993database} streams, whereas an oracle context of the
concept's past rows reaches at least 0.99. The \emph{splice test} shows that mixing concepts in a stored context can remove its gain. Moving the boundaries of eight grid streams 200 rows into fixed 500-row segments lowers the gain of stored
contexts over reset from $+3.09$ to $-3.67$ points, and realigning the segments on the same rows returns it to
$+3.01$. The \emph{curve test} shows that relearning speed differs across concepts. Concepts with 5 centroids are
learned from 50 rows, concepts with 100 centroids need more than 1000, and concepts of the standard generators about
200. Because 200 rows are two batches, resetting the context at a detected drift already recovers in the revisit
test. FIFO's failure on recurrence therefore establishes forgetting, whereas the learning curve determines whether
stored memory adds value beyond reset.

\section{Method: A Mixture of In-Context Experts}
\label{sec:method}

\method{} uses contexts as experts of one frozen TFM. A fast level weights stored contexts and FIFO windows by
the log-loss of their predictions on the last labelled batch. A slow level mixes the fast level's prediction with
that of FIFO and bounds the cost of following the fast level. \Cref{sec:m-recall} quantifies the gain of a stored
context. Segmentation and merging construct reusable contexts that hold mostly one concept.
\Cref{fig:method} gives the overview, \cref{alg:mice} in \cref{app:impl} the per-batch procedure, and
\cref{tab:notation} the notation. Formal statements are in \cref{app:theory} and proofs in \cref{app:proofs}.

\begin{figure*}[t]
  \centering
  \includegraphics[width=\textwidth]{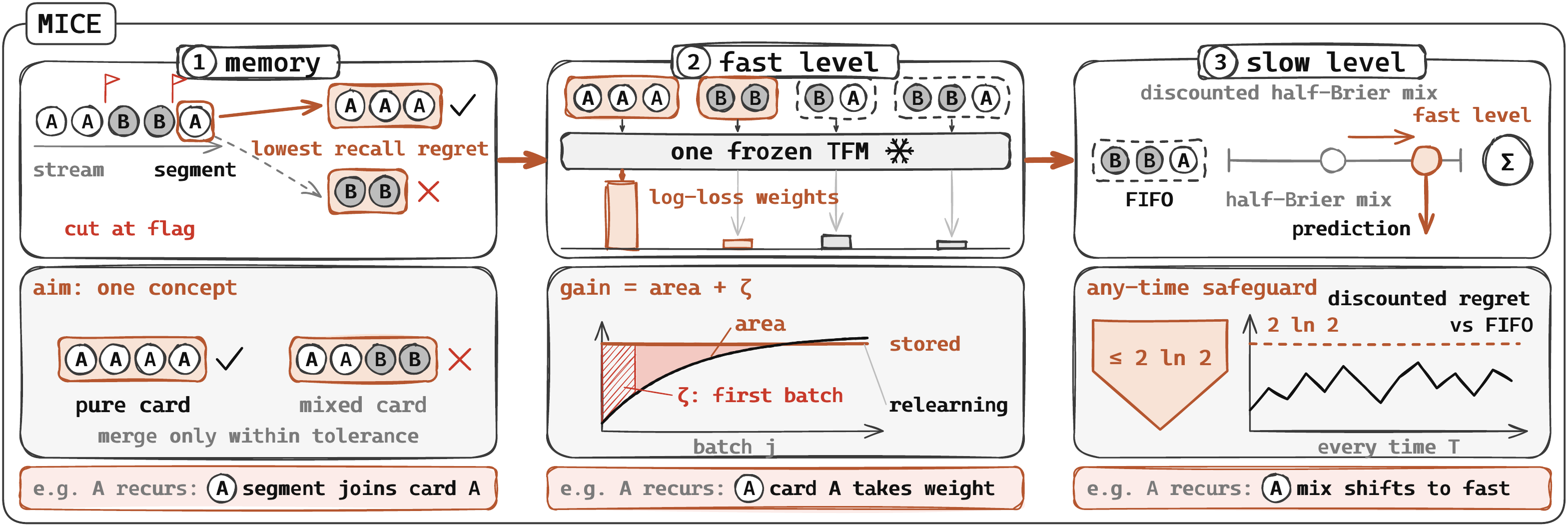}
  \caption{Overview of \method{} (schematic, bottom row illustrative). (1) Segments cut at detected changes join the stored context of lowest estimated recall regret within a tolerance. (2) The fast level weights stored-context and FIFO predictions from one frozen TFM by their log-loss on the last labelled batch. Under pure contexts and the margins of \cref{prop:recall}, its expected accuracy gain over relearning equals the area term plus the first-batch term $\zeta$. (3) The slow level mixes the fast-level and FIFO predictions. Its discounted half-Brier regret against FIFO is at most $2\ln2$ at every time.}
  \Description{Three columns, one per stage, are linked by arrows from memory to fast level to slow level. Column 1
  (memory): a stream A, A, B, B, A with red flags at detected changes is cut into segments. The last segment joins the
  stored context AAA, which has the lowest recall regret, and not BB. Its middle panel states the aim of one concept
  per context: a pure context AAAA is accepted and a mixed context AABB is rejected, and merges happen only within a
  tolerance. Column 2 (fast level): stored contexts and FIFO windows condition one frozen TFM and receive log-loss
  weights. Its middle panel plots accuracy against batch for a stored context and for relearning. The gain is the
  shaded area between the two curves plus a hatched first-batch term zeta. Column 3 (slow level): a discounted
  half-Brier mix between FIFO and the fast level gives the prediction. Its middle panel shows discounted regret
  against FIFO below 2 ln 2 at every time. An illustrative bottom row reads, for each stage when A recurs: the segment
  joins context A, context A takes the weight, and the mix shifts to the fast level.}
  \label{fig:method}
\end{figure*}

\subsection{Learning Curves Predict When Recall Pays}
\label{sec:m-recall}

Let concepts recur in blocks of $L$ rows, $J=L/B$ batches, and let $a_k(n)$ be the accuracy learning curve of concept
$k$, non-decreasing in $n$. On the $c$-th visit of concept $k$ its stored expert holds $n_c=\min(M,(c-1)L)$ rows.
The \emph{relearning} learner resets its context at the block boundary. The fast level weights the stored experts and
FIFO windows by the log-loss of their predictions on the last labelled batch, as specified in \cref{sec:m-fast}.
Here $K$ is the number of candidate experts, $\eta$ is the exponential-weight parameter, and $g$ is the
log-loss margin of the leading expert over every other expert, as defined in \cref{ass:margin}.

\begin{theorem}[Recall versus relearn]\label{prop:recall}
Let the stored context of concept $k$ hold rows of $k$ only, and on every labelled batch of the block let the expert
with the larger learning-curve value, the stored expert if $a_k(n_c)\ge a_k(\min(jB,M))$ and the window otherwise,
lead by a margin $g$ with $(K-1)e^{-\eta g}$ below the gap between its two largest predicted class probabilities
on every evaluated row. On the $c$-th visit of concept $k$, the expected
accuracy of the fast level minus that of relearning, averaged over the block, is
\begin{equation}\label{eq:area}
  \Delta_{k,c}(L)=\frac1J\sum_{j=1}^{J-1}\Bigl[a_k(n_c)-a_k\bigl(\min(jB,M)\bigr)\Bigr]_+ \;+\;\zeta_{k,c},
\end{equation}
where $\zeta_{k,c}$, the difference on the first batch of the block divided by $J$, satisfies $|\zeta_{k,c}|\le1/J$. We call the sum
the \emph{area term}.
\end{theorem}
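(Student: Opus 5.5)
The proof decomposes the block average batch by batch and shows that, on every batch after the first, the fast level's prediction agrees with the argmax of the expert that the statement's margin condition names as leader. On batch $j$ of the block, with $j\ge2$, the fast level's weights are computed from the log-losses on batch $j-1$, the last labelled batch, which lies inside the block. By hypothesis the leading expert beats each other expert by at least $g$ on that batch, so each non-leader has normalised weight at most $e^{-\eta g}$. The leader therefore carries weight at least $1-(K-1)e^{-\eta g}$. The non-leaders' combined contribution to any class probability is at most $(K-1)e^{-\eta g}$ in total.

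The key step compares the mixture's class scores row by row. Let $p$ be the leader's predictive distribution, with top class $y^\star$ and runner-up $y'$, and let the mixture be $\tilde p=(1-\epsilon)p+\epsilon q$ with $\epsilon\le(K-1)e^{-\eta g}$ and $q$ a distribution. For any class $y\neq y^\star$,
\[
\tilde p(y^\star)-\tilde p(y)\ge(1-\epsilon)\bigl(p(y^\star)-p(y')\bigr)-\epsilon .
\]
Showing this is positive from the stated hypothesis takes some care. The hypothesis says $\epsilon$ is below the gap $p(y^\star)-p(y')$. I would use the fact that $q(y)\le 1$ and that the weight reassigned away from $p$ reduces $p(y^\star)$ by at most $\epsilon p(y^\star)$. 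A slightly sharper bookkeeping gives $\tilde p(y^\star)-\tilde p(y)\ge \text{gap}-\epsilon>0$: write $\tilde p(y^\star)-\tilde p(y)=p(y^\star)-p(y)-\epsilon[(p(y^\star)-p(y))-(q(y^\star)-q(y))]$ and bound the bracket by $\ldots$. If that fails, I would interpret the condition as covering this constant, as the paper's formal version in \cref{app:theory} presumably does. I expect this to be the main obstacle. With it in hand, on every evaluated row of batch $j\ge2$ the fast level predicts exactly what the leader predicts, so its 0-1 accuracy there equals the leader's.

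The accounting then follows. By purity, the stored expert's context consists of $n_c$ rows of concept $k$, so its expected accuracy on the block is $a_k(n_c)$. The relearning learner on batch $j+1$ holds $\min(jB,M)$ rows of concept $k$, so its accuracy is $a_k(\min(jB,M))$; FIFO, whose window after the reset-equivalent point contains exactly these rows, matches it. The leader is whichever has the larger curve value, so the fast level's accuracy minus relearning's on batch $j+1$ is $\max(a_k(n_c),a_k(\min(jB,M)))-a_k(\min(jB,M))=[a_k(n_c)-a_k(\min(jB,M))]_+$. Summing over $j=1,\dots,J-1$ and dividing by $J$ gives the area term. A small caveat is needed to justify that the FIFO expert equals the relearner in accuracy inside the block: this holds for a window of length $\min(jB,M)$ that holds only block rows, one of the FIFO windows in the pool. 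Otherwise the statement's ``window'' should be read as that expert.

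The first batch is handled without the margin, because its weights come from the previous block. Both accuracies lie in $[0,1]$, so their difference, divided by $J$, is $\zeta_{k,c}$ with $|\zeta_{k,c}|\le1/J$. Linearity of expectation over rows and batches completes the identity \cref{eq:area}.
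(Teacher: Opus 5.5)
Your decomposition matches the paper's proof. The first batch gives $\zeta_{k,c}$ with $|\zeta_{k,c}|\le 1/J$. Each later batch $j+1$ contributes $\max\{a_k(n_c),a_k(\min(jB,M))\}-a_k(\min(jB,M))$, because the mixture copies the leader's class on every row. Averaging over the block gives \eqref{eq:area}. Your caveat that the ``window'' must be the expert with accuracy $a_k(\min(jB,M))$ is the identification the paper makes silently when it compares the stored expert with the relearning context.

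The gap is in the step you flagged as the main obstacle, and neither way out you propose works. The ``sharper bookkeeping'' $\tilde p(y^\star)-\tilde p(y)\ge \mathrm{gap}-\epsilon$ is false. Take two classes, $p=(0.75,0.25)$, $q=(0,1)$ and $\epsilon=0.4$. Then $\tilde p=(0.45,0.55)$, and the mixture flips although $\epsilon<\mathrm{gap}=0.5$. So $\epsilon<\mathrm{gap}$ is not sufficient, and reading the hypothesis that way weakens it. Your bounds $\epsilon\le(K-1)e^{-\eta g}$ and $w_{k^\star}\ge 1-(K-1)e^{-\eta g}$ are also too loose: with $x=(K-1)e^{-\eta g}$ they only give $\epsilon/(1-\epsilon)\le x/(1-x)$. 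You would then need $\mathrm{gap}>x/(1-x)$, which is stronger than what is assumed.

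The missing observation is that the weights \eqref{eq:fast} control the \emph{odds} of the leader's weight exactly. By \cref{ass:margin},
\[
\frac{\epsilon}{1-\epsilon}=\frac{1-w_{k^\star}}{w_{k^\star}}=\sum_{k\neq k^\star}e^{-\eta(\ell_k-\ell_{k^\star})}\le (K-1)e^{-\eta g}.
\]
Your displayed inequality gives $\tilde p(y^\star)-\tilde p(y)\ge(1-\epsilon)\,\mathrm{gap}-\epsilon$. Its right-hand side is positive if and only if $\mathrm{gap}>\epsilon/(1-\epsilon)$. That follows from the stated condition $\mathrm{gap}>(K-1)e^{-\eta g}$ with no reinterpretation.

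This is \cref{lem:margin}(iii) in the paper. There the bound is $w_{k^\star}\,\mathrm{gap}-(1-w_{k^\star})>0$ whenever $\mathrm{gap}>(1-w_{k^\star})/w_{k^\star}$, and that threshold is at most $e^{\kappa}-1=(K-1)e^{-\eta g}$. With this line inserted, the rest of your argument goes through as written.
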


The area term is the area by which the learning curve lies below the accuracy of the stored expert. It is zero on the
first visit and whenever the curve is flat, and at most $\tfrac{M}{L}\bigl(a_k(M)-a_k(B)\bigr)$ in general, as \cref{cor:flat} states. Thus slow relearning creates more room for recall, while long blocks dilute its average gain.
The identity covers the fast level with pure contexts under the stated margins. For the full method, whose
detected segments can contain mixed concepts and whose predictions include the slow level, the area term supplies
an empirical prediction tested in \cref{sec:rq1}.

\subsection{Constructing Reusable Contexts}
\label{sec:m-lib}

To recover the gain in \cref{eq:area}, a stored expert must retain rows that predict the returning concept well.

\method{} segments the stream at detected changes to keep stored contexts pure, because purity governs recall under the assumptions of \cref{prop:purity}. Under the posterior-predictive limit of \cref{ass:ppd}, \cref{prop:purity} shows that a stored context of two deterministic concepts with a shared input distribution recalls the concept that holds the majority of its rows. \method{} locates candidate changes using a two-sample test on the 300-row window's error rate.
Let $z_t$ be the number of errors that the window of the last 300 rows
makes on batch $t$, scored when its labels arrive. For every candidate change $b\in\{t-H_{\mathrm{cp}},\dots,t-1\}$,
the test compares the error rate on the batches $b,\dots,t-1$ ($n_2$ rows) with that on up to $R_{\mathrm{cp}}$
batches before $b$ ($n_1$ rows, at least two batches, none before the previous declared change), and declares a change
at the candidate with the largest excess
\begin{equation}\label{eq:cp}
  \frac{1}{n_2}\sum_{s=b}^{t-1}z_s-\frac{1}{n_1}\sum_{s=b-R_{\mathrm{cp}}}^{b-1}z_s-\tau(n_1,n_2),\quad
  \tau=\sqrt{\tfrac12\bigl(\tfrac1{n_1}+\tfrac1{n_2}\bigr)\ln\tfrac1{\delta_{\mathrm{cp}}}},
\end{equation}
if that excess is positive, with $H_{\mathrm{cp}}=3$, $R_{\mathrm{cp}}=5$ and $\delta_{\mathrm{cp}}=10^{-3}$.  When the
test declares a change at $b$, the open segment ends before batch $b$ and closes if it holds at least 300 rows.
A shorter segment stays in the windows only. The batches from $b$ on start the next segment. Without a declared
change, a segment closes after 500 rows once three further batches are labelled, and these three batches start the
next segment. A change can be declared before this closure, which gives segments of 300 to 700 rows. \emph{Fixed segmentation}, a variant used in several experiments,
closes a segment every 500 rows instead.

\method{} merges a segment into a stored context when the TFM's estimated additional error on the segment is within a tolerance, which compares contexts through the TFM's own predictions.
A closed segment $S$ joins the stored expert $\cC_k$ that minimises the \emph{recall regret}
\begin{equation}\label{eq:dt-hat}
  \widehat{\Div}_T(S\,|\,\cC_k)=\hat\epsilon_S(\cC_k)-\hat\epsilon_S(S),
\end{equation}
the 0-1 risk on $S$ of the TFM conditioned on $\cC_k$ minus that conditioned on $S$ itself, if this is at most
$\delta=0.02$. Otherwise $S$ founds a new stored expert. $\hat\epsilon_S(S)$ is cross-fitted in two folds because an
in-context learner memorises its context. \Cref{eq:dt-hat} is the finite-context estimate of the recall regret of
\cref{def:excess}. In the population limit of \cref{ass:ppd}, for two deterministic concepts with a shared input
distribution, this cost is their disagreement mass under the 0-1 loss, which is \cref{prop:purity} with $\pi=0$.
Concepts that conflict on more than a share $\delta$ of the inputs are therefore not merged. A stored expert keeps
its last $M$ rows, and the pool holds at most 12 (least recently used out).

\subsection{Selecting Contexts from Recent Feedback}
\label{sec:m-fast}

Because a cumulative loss still carries the previous concept after a recurrence, the fast level weights the experts by
their log-loss on the last labelled batch only. The experts are the FIFO windows of the last 100, 300 and $M$
labelled rows and the stored experts of the pool, and the fast level weights the $K$ experts by
\begin{equation}\label{eq:fast}
  w_k\propto\exp(-\eta\,\ell_k),\qquad \eta=10,
\end{equation}
where $\ell_k$ is the mean log-loss of the predictions of expert $k$ on the last labelled batch. In other words, a stored expert that leads the last labelled batch by a clear log-loss margin takes most of the
weight one batch after its concept returns. If one
expert leads the last labelled batch by a margin $g$ in log-loss as in \cref{ass:margin}, it holds weight at least
$1/(1+(K-1)e^{-\eta g})$, and the mixture predicts its class on every row where its top two probabilities differ by
more than $(K-1)e^{-\eta g}$, as \cref{lem:margin} shows. Over a block of one concept on which the margin holds at every batch,
with log-losses clipped at $c$, the regret of the fast level is at most $c$ for the first batch, because labels arrive
one batch late, plus a term exponentially small in $g$, as \cref{prop:block} states.

\subsection{Bounding the Loss of Following Memory}
\label{sec:m-slow}

When the experts have similar losses, the fast level can change its weights in response to small batch-to-batch
fluctuations. The slow level uses a discounted loss history to control the cost of following the fast level.
\method{} mixes two experts, FIFO ($k=0$) and the fast level \eqref{eq:fast} ($k=1$), with predictions $p_{t,0}$ and
$p_{t,1}$ on batch $t$. With $\ell_{t,k}\in[0,1]$ the mean half Brier score $\tfrac12\lVert p-e_y\rVert_2^2$ of the
predictions of expert $k$ on batch $t$,
\begin{align}
  L_{t,k}&=\gamma L_{t-1,k}+\ell_{t,k},\qquad L_{0,k}=0, \label{eq:outer-loss}\\
  v_{t,k}&\propto\exp\bigl(-\eta_2\,\gamma\,L_{t-1,k}\bigr), \label{eq:outer-weight}\\
  p_t&=v_{t,0}\,p_{t,0}+v_{t,1}\,p_{t,1}, \label{eq:outer}
\end{align}
with $\gamma=0.9$ and $\eta_2=\tfrac12$. Because the discount spans about $1/(1-\gamma)$ batches, the slow level moderates the influence of the fast
predictions when batch-to-batch losses fluctuate.

\begin{theorem}[Any-time safeguard]\label{prop:outer}
Let $K$ experts be combined by \eqref{eq:outer-loss}--\eqref{eq:outer}, with $\ell$ the half Brier score and a step
$\eta_2\le\tfrac12$. For every
sequence of labels and expert predictions, every time $T$ and every expert $k$,
\begin{equation}\label{eq:outer-bound}
  \sum_{t=1}^{T}\gamma^{\,T-t}\bigl(\ell_t(p_t)-\ell_{t,k}\bigr)\;\le\;\frac{\ln K}{\eta_2} .
\end{equation}
\end{theorem}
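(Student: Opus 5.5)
The plan is the standard exponential-weights potential argument, adapted to discounting. It rests on two ingredients: exp-concavity of the batch-averaged half Brier score, and a power-mean inequality that absorbs the discount factor $\gamma$ into the potential.

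\emph{Step 1 (exp-concavity).} I would first show that for $\eta_2\le\tfrac12$ the map $p\mapsto\exp\bigl(-\eta_2\cdot\tfrac12\lVert p-e_y\rVert_2^2\bigr)$ is concave on the probability simplex. This is the known exp-concavity of the (half) Brier/square loss on the simplex. To check it, I would compute the Hessian of $-\eta_2\ell$ plus the rank-one term $\eta_2^2\nabla\ell\nabla\ell^\top$. Then I would verify that it is negative semidefinite along directions $u$ with $\sum_i u_i=0$, using $\lVert p-e_y\rVert_2^2\le2$.

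The batch loss $\ell_t$ is a mean over the $B$ rows. So $e^{-\eta_2\ell_t(p)}$ is the geometric mean of $B$ nonnegative concave functions, and a geometric mean of such functions is concave. Hence, with normalised weights $v_{t,k}$,
\begin{equation*}
e^{-\eta_2\ell_t(p_t)}\ \ge\ \sum_k v_{t,k}\,e^{-\eta_2\ell_{t,k}} .
\end{equation*}
I expect this step to be the main obstacle. The constant must be checked carefully for the multiclass case, and the argument must stay on the simplex, since exp-concavity can fail off it. If the direct Hessian check is messy, I would fall back on the known $\tfrac12$-exp-concavity of the unhalved Brier score in the literature. Halving the loss only relaxes the allowed $\eta$, so $\eta_2\le\tfrac12$ is then more than enough.

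\emph{Step 2 (discounted potential).} Let $Z'_t=\sum_k e^{-\eta_2 L_{t,k}}$, so $Z'_0=K$, and let $Z_t=\sum_k e^{-\eta_2\gamma L_{t-1,k}}$. Because $e^{-\eta_2 L_{t,k}}=e^{-\eta_2\gamma L_{t-1,k}}e^{-\eta_2\ell_{t,k}}$, Step 1 gives
\begin{equation*}
\ell_t(p_t)\le\tfrac1{\eta_2}\bigl(\ln Z_t-\ln Z'_t\bigr).
\end{equation*}
Next I would write $Z_t=\sum_k\bigl(e^{-\eta_2L_{t-1,k}}\bigr)^{\gamma}$. Concavity of $x\mapsto x^\gamma$ (the power-mean inequality) then yields $Z_t\le K^{1-\gamma}(Z'_{t-1})^{\gamma}$. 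Substituting,
\begin{equation*}
\ell_t(p_t)\le\tfrac1{\eta_2}\bigl((1-\gamma)\ln K+\gamma\ln Z'_{t-1}-\ln Z'_t\bigr).
\end{equation*}

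\emph{Step 3 (telescoping).} I would multiply the Step 2 bound by $\gamma^{T-t}$ and sum over $t=1,\dots,T$. The potential terms telescope to $\gamma^T\ln Z'_0-\ln Z'_T=\gamma^T\ln K-\ln Z'_T$. The constant terms sum to $(1-\gamma^T)\ln K$. Together this gives
\begin{equation*}
\sum_{t}\gamma^{T-t}\ell_t(p_t)\le\tfrac1{\eta_2}\bigl(\ln K-\ln Z'_T\bigr).
\end{equation*}
Finally, $-\ln Z'_T\le\eta_2L_{T,k}$ for every $k$, and $L_{T,k}=\sum_t\gamma^{T-t}\ell_{t,k}$ by unrolling \eqref{eq:outer-loss}. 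Rearranging yields \eqref{eq:outer-bound}. The bound holds for every sequence of labels and predictions, since no stochastic assumption was used. With $K=2$ and $\eta_2=\tfrac12$ it gives the constant $2\ln2$ quoted in the text.
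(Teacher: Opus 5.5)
Your proposal is correct and follows the paper's proof step for step: $\eta_2$-exp-concavity of the batch-averaged half Brier score from its Hessian $I$ and $\lVert p-e_y\rVert_2^2\le2$, the power-mean inequality $\sum_k x_k^{\gamma}\le K^{1-\gamma}\bigl(\sum_k x_k\bigr)^{\gamma}$ to absorb the discount into the potential, and telescoping down to $\ln K-\ln Z'_T$ with $-\ln Z'_T\le\eta_2L_{T,k}$ (your geometric-mean argument for the batch average is a clean substitute for the paper's Hessian-averaging remark). Drop the fallback, though, because its premise is false: the unhalved score $\lVert p-e_y\rVert_2^2$ is only $\tfrac14$-exp-concave on the simplex (on the edge from $e_y$ to $e_{y'}$ it equals $2s^2$, and $e^{-2\eta s^2}$ is concave on $[0,1]$ only for $\eta\le\tfrac14$), so the direct Hessian check, which gives exactly the threshold $\eta_2\le\tfrac12$ for the half score, is the argument you need.
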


With $K=2$, the discounted regret of \method{} against FIFO, and against the fast level, is at most
$2\ln2\approx1.39$ at every time, over a discounted horizon of $1/(1-\gamma)=10$ batches, with no assumption on the
stream. The bound covers the method as run, not the variant with the log-loss or the variant with $\eta_2=10$ in
\cref{tab:main}.

\subsection{Timing and Cost}
\label{sec:m-cost}

When the labels of the previous batch arrive, every loss scores a prediction issued before they arrived, and the
contexts are updated only afterwards. No expert is therefore scored on rows it already holds, and the slow level is
scored on the predictions actually issued. At most one segment closes
per batch, and \cref{app:impl} gives the initialisation.  The cost is one TFM call per expert and batch, at most 15. Closing a segment adds two cross-fitted calls and one call per stored expert, at most 14 calls, which brings a closing batch to at most 29 calls. The change test reuses the predictions of the 300-row window and makes no call.

In summary, stored contexts provide the gain of recall, and the slow level bounds what following them can cost.

\section{Experiments}
\label{sec:exp}

The evaluation follows the gain predicted by \cref{eq:area} from ideal contexts to online use.
RQ1: does the area identity describe recall gains under its assumptions, and predict their ordering for the
full method? RQ2: can detected segmentation recover gains when boundaries are unknown?
RQ3: how does slow aggregation affect prediction loss and accuracy on real streams?
RQ4: which gains come from the pool and its organisation, and what resources do they require?

\subsection{Experimental Setup}
\label{sec:setup}

Every policy is evaluated prequentially, predicting each batch of $B=100$ rows before its labels arrive one batch later. Each TFM expert conditions on at most $M=1000$ context rows and the pool holds at most 12 stored contexts. The backbone is TabPFN v2 with four ensemble members \citep{hollmann2025tabpfn}, and FIFO windows exceed 1000 rows only in the larger-window comparison of \cref{sec:ablation}. GPU experiments run on eight NVIDIA A800 GPUs, while the trained stream learners run on CPUs. 

The streams range from a controlled grid with known concepts and measured learning curves to real streams, where both are unobserved. The grid mixes Gaussian centroids in ten dimensions with four classes, using centroid counts from $\{5,30,100\}$. Each of the three concepts assigns its own class to every centroid, keeping $P_X$ fixed while creating conflicts wherever classes differ. Grid streams have blocks of 500 or 2000 rows and three cycles, as \cref{app:grid} details.
Thirty streams come from six standard generators with recurring concepts in River
\citep{montiel2021river}\footnote{\url{https://riverml.xyz}}: SEA \citep{street2001sea}, STAGGER
\citep{schlimmer1986stagger}, Agrawal \citep{agrawal1993database}, Sine \citep{gama2004ddm}, RBF \citep{bifet2009new}
and Hyperplane \citep{hulten2001mining}. Generator streams have blocks of 2000 rows and three cycles over three or four concepts. Twenty-nine real streams come from 13 sources, the long ones in segments of
100{,}000 rows. They are Electricity from River, seven INSECTS streams
\citep{souza2020insects}\footnote{\url{https://sites.google.com/view/uspdsrepository}}, Forest
CoverType\footnote{\url{https://archive.ics.uci.edu/dataset/31/covertype}}, and PokerHand, Airlines, Rialto
\citep{losing2016knn}, NOAA Weather \citep{elwell2011nse}, Spam \citep{katakis2010tracking} and Phishing from a public
collection of drift data sets\footnote{\url{https://github.com/ogozuacik/concept-drift-datasets-scikit-multiflow}},
which contributes its remaining streams with at least 5000 rows and at most ten classes. Six later segments of Airlines, CoverType and PokerHand serve as confirmation
segments, and the earlier real-stream sets served as development data, as \cref{app:prereg} details. Nine
independent confirmation segments come from two further sources, Bitcoin Heist ransomware transactions
\citep{akcora2020bitcoinheist}\footnote{\url{https://archive.ics.uci.edu/dataset/526}} and CIC-IDS2017 network
intrusions \citep{sharafaldin2018cicids}\footnote{\url{https://www.unb.ca/cic/datasets/ids-2017.html}}.

The baselines cover the two ways of following drift, adapting the context of a TFM and training a model on the stream. The in-context baselines are FIFO, LTM \citep{lourenco2026streams}, reset \citep{gama2004ddm}
and the window ensemble of the three FIFO windows of \method{} without the pool. Trained stream learners are Adaptive
Random Forest (ARF) \citep{gomes2017arf}, Streaming Random Patches (SRP) \citep{gomes2019srp}, Hoeffding Adaptive Tree
(HAT) \citep{bifet2009hat}, Leveraging Bagging \citep{bifet2010levbag}, ADWIN Bagging \citep{bifet2009new}
and Learn++.NSE \citep{elwell2011nse}.  The incremental learners use the default settings of \texttt{river} \citep{montiel2021river}
with ten base models and the same batch protocol. Online forecasters FSNet \citep{pham2023fsnet}, OneNet \citep{zhang2023onenet}, ShifTS
\citep{zhao2026shifts} and ADAPT-Z \citep{huang2026adaptz} get a softmax head and are tuned on the five development streams.
Drift-Resilient TabPFN \citep{helli2024drift} runs without tuning.

Four metrics complement accuracy by weighting classes equally and by scoring the predicted probabilities. F1 is the macro-F1 over the classes that occur in a stream. AUC, the area under the receiver
operating characteristic curve, is computed one-vs-rest from each policy's predicted class probabilities and averaged
over these classes. 
The expected calibration error (ECE) bins the predictor's largest class probability into 15 equal bins. Brier is the
half Brier score $\tfrac12\lVert p-e_y\rVert_2^2$ averaged over rows. All four are reported in percent.

Unless a result names fixed segmentation, \method{} segments at detected changes, which improves on fixed
segmentation by $+0.04$ points on the aligned grid and $+0.01$ on the real streams. The generator streams, the
confirmation segments, the ablations and the timings use fixed segmentation, and each comparison below names its
variant. \Cref{app:prereg} records how the development and confirmation streams were separated and every registered
hypothesis with its outcome. \Cref{tab:main} reports both the slow step $\eta_2=\tfrac12$ covered by
\cref{prop:outer} and the step of 10.

\begin{table*}[t]
  \centering
  \caption{Main results (\%, mean over streams). Worst $\Delta$: lowest accuracy difference to FIFO over the real
  streams, in points. Indep.\ (9): the nine independent confirmation segments of Bitcoin Heist and CIC-IDS2017. The top three values per column are
  shaded.}
  \label{tab:main}
  \footnotesize
  \setlength{\tabcolsep}{2.5pt}
  \begin{tabular*}{\textwidth}{@{\extracolsep{\fill}}lcccccc|ccccc|c|c}
    \toprule[1pt]
     & \multicolumn{6}{c|}{\textbf{Real streams (29)}} & \multicolumn{5}{c|}{\textbf{Controlled grid (24)}} & \textbf{Gen.\ (30)} & \textbf{Indep.\ (9)} \\
    \cmidrule(lr){2-7}\cmidrule(lr){8-12}\cmidrule(lr){13-13}\cmidrule(lr){14-14}
    \textbf{Policy} & \textbf{Acc.}\ $\uparrow$ & \textbf{F1} $\uparrow$ & \textbf{AUC} $\uparrow$ & \textbf{ECE} $\downarrow$ & \textbf{Brier} $\downarrow$ & \textbf{Worst} $\Delta$ $\uparrow$
           & \textbf{Acc.}\ $\uparrow$ & \textbf{F1} $\uparrow$ & \textbf{AUC} $\uparrow$ & \textbf{ECE} $\downarrow$ & \textbf{Brier} $\downarrow$ & \textbf{Acc.}\ $\uparrow$ & \textbf{Acc.}\ $\uparrow$ \\
    \midrule
\multicolumn{14}{l}{\emph{Trained stream learners}} \\
ARF~\citep{gomes2017arf} & 73.4 & 58.4 & 85.4 & 6.0 & 18.7 & $-$37.5 & 67.7 & 65.6 & 83.8 & 11.2 & 22.9 & 88.0 & 98.39 \\
SRP~\citep{gomes2019srp} & 76.6 & 62.3 & 87.5 & 6.0 & 16.7 & $-$19.8 & 69.5 & 68.1 & 84.7 & 8.8 & 21.6 & 86.4 & 98.57 \\
HAT~\citep{bifet2009hat} & 64.3 & 51.1 & 79.1 & 16.5 & 27.4 & $-$49.6 & 51.4 & 49.4 & 74.5 & 13.3 & 32.1 & 79.2 & 97.93 \\
Leveraging Bagging~\citep{bifet2010levbag} & 72.4 & 57.3 & 85.9 & 5.1 & 19.1 & $-$34.6 & 62.0 & 60.6 & 82.4 & 7.2 & 24.9 & 85.9 & 98.39 \\
ADWIN Bagging~\citep{bifet2009new} & 65.1 & 51.5 & 81.4 & 7.8 & 24.0 & $-$50.1 & 51.2 & 49.2 & 76.1 & 9.4 & 30.8 & 77.0 & 97.91 \\
Learn++.NSE~\citep{elwell2011nse} & 67.7 & 51.2 & 82.8 & 8.5 & 22.3 & $-$41.0 & 61.6 & 60.0 & 81.2 & 30.0 & 33.6 & 84.1 & 94.86 \\
FSNet~\citep{pham2023fsnet} & 77.3 & 64.9 & 88.1 & 4.6 & 15.6 & $-$26.7 & 62.7 & 61.3 & 80.4 & 8.4 & 24.8 & 87.5 & 98.21 \\
OneNet~\citep{zhang2023onenet} & 77.8 & 65.1 & 88.1 & 4.2 & 15.2 & $-$22.1 & 62.4 & 60.9 & 80.1 & \second{7.0} & 24.6 & 87.6 & 97.19 \\
ShifTS~\citep{zhao2026shifts} & 73.4 & 59.9 & 84.6 & 3.5 & 17.7 & $-$30.4 & 51.1 & 47.0 & 71.7 & \first{6.0} & 29.3 & 76.4 & 97.72 \\
ADAPT-Z~\citep{huang2026adaptz} & 65.6 & 50.7 & 80.1 & 5.6 & 22.3 & $-$52.3 & 50.1 & 46.2 & 70.7 & \first{6.0} & 30.0 & 78.1 & 97.43 \\
\midrule
\multicolumn{14}{l}{\emph{In-context policies, frozen TabPFN}} \\
FIFO & \second{83.3} & 73.5 & 89.9 & 1.7 & \third{11.6} & -- & 61.9 & 60.9 & 83.7 & 16.7 & 25.1 & 87.6 & 98.57 \\
LTM~\citep{lourenco2026streams} & 82.6 & 72.1 & \first{91.0} & 1.5 & 11.9 & $-$2.7 & 62.2 & 61.4 & 84.8 & 16.0 & 24.5 & 87.6 & 98.21 \\
Reset~\citep{gama2004ddm} & 81.5 & 71.3 & 88.5 & 2.6 & 12.9 & $-$9.6 & 84.1 & 83.8 & 91.4 & 7.3 & 13.1 & 94.8 & 98.49 \\
Window ensemble & \third{83.0} & 73.0 & 89.5 & \second{1.2} & 11.7 & $-$1.4 & 81.6 & 81.0 & 90.8 & 8.7 & 14.9 & 94.6 & 98.44 \\
Drift-Resilient TabPFN~\citep{helli2024drift} & 81.7 & 68.7 & 88.4 & 3.0 & 12.9 & $-$4.2 & 76.4 & 75.6 & 87.4 & 9.8 & 17.9 & 93.7 & 98.37 \\
\midrule
\multicolumn{14}{l}{\emph{Ours}} \\
\textbf{\method{} (full, detected seg.)} & \first{83.8} & \first{74.4} & \second{90.7} & \third{1.3} & \first{11.2} & \second{$-$0.23} & \third{86.4} & \third{86.0} & \second{92.4} & 9.9 & \third{12.6} & \third{95.0} & \third{98.59} \\
\method{}, fixed segments & \first{83.8} & \second{74.3} & \third{90.5} & \third{1.3} & \first{11.2} & \third{$-$0.25} & \third{86.4} & \third{86.0} & \third{92.3} & 10.1 & 12.7 & \third{95.0} & \second{98.60} \\
\method{}, fixed segments, $\eta_2=10$ & \first{83.8} & \third{74.2} & \third{90.5} & \first{1.1} & \first{11.2} & \first{$-$0.08} & \second{86.6} & \second{86.3} & \first{93.6} & \second{7.0} & \second{11.5} & \second{95.2} & \first{98.61} \\
\method{} w/o slow level (fixed seg.) & \second{83.3} & 73.4 & 90.3 & 1.5 & \second{11.5} & $-$2.3 & \first{86.8} & \first{86.5} & \first{93.6} & \third{7.1} & \first{11.4} & \first{95.3} & 98.50 \\
    \bottomrule[1pt]
  \end{tabular*}
\end{table*}

\subsection{RQ1: Learning Curves Predict Recall Gains}
\label{sec:rq1}

The oracle-boundary test checks the identity of \cref{prop:recall} with known boundaries, the relearning learner and
the stored expert of the theorem, which holds the visit-specific $n_c$ rows. Its observed quantity is the measured area
term $J^{-1}\sum_{j=2}^{J}(\mathrm{acc}_{X,j}-\mathrm{acc}_{R,j})$, where $\mathrm{acc}_{X,j}$ and $\mathrm{acc}_{R,j}$
are the accuracies of the fast level and of relearning on batch $j$ of the block. The first batch enters only
$\zeta_{k,c}=(\mathrm{acc}_{X,1}-\mathrm{acc}_{R,1})/J$. Its predicted quantity is the area term computed from learning
curves with the same $n_c$. On seeds 40 and 41 the two lie within 0.7 points of each other in every cell, with a mean
absolute difference of 0.37 points and Spearman correlation 1.0, as \cref{fig:rq1}b shows. The mixture assigns more
than half of its weight to the more accurate expert on every batch where the two differ by more than 10 points.

Beyond the ideal setting, learning curves measured before the grid streams were run also rank the observed recall
gains, as \cref{app:grid} describes. These registered predictions insert the curves into the area term with the full stored
context of $M$ rows on every recurrence. Their observed counterpart is the gain over reset of a complete policy with
fixed segmentation. On seeds 10 and 11, whose concepts supplied the curves, \method{} w/o slow level exceeds reset on
all eight streams with 30 or 100 centroids, more for shorter blocks and harder concepts. Predicted and observed gains
have Spearman correlation 0.94 across the six cells (10.4 against 8.6 points in the hardest cell). For the full method
on fresh seeds 20 and 21 the correlation is 0.83, as \cref{fig:rq1}a shows. With the visit-specific $n_c$ in place of
$M$, the predictions change only for $L=500$ (100 centroids: 9.2 instead of 10.4), and the rank correlations stay
unchanged.

\begin{figure}[t]
  \centering
  \includegraphics[width=\columnwidth]{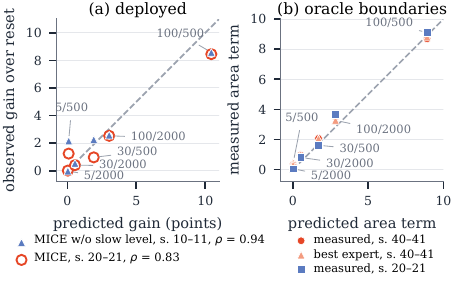}
  \caption{Learning curves predict the gain of recall, per grid cell (centroids/block length). Dashed lines mark
  equality. (a) With fixed segments, observed gain over reset of \method{} w/o slow level (seeds 10, 11) and of \method{} (seeds 20, 21) against
  the prediction from learning curves measured beforehand. (b) With oracle boundaries, the measured area term
  of \cref{prop:recall} (seeds 40, 41 and 20, 21) and the best expert per batch against the predicted area.}
  \Description{Scatter plots compare predicted recall gains with observed gains and compare the predicted area term with its oracle-boundary measurement.}
  \label{fig:rq1}
\end{figure}

Stored contexts add more accuracy on slowly learned concepts, because they replace a longer period of relearning. With fixed segments, \method{} exceeds reset and the window ensemble on all eight streams with 30 or 100 centroids. Averaged over the twelve streams of seeds 20 and 21 it reaches 86.5 against 84.3 for reset, 81.8 for the window ensemble and 62.4 for FIFO, up to
8.5 points over reset in the hardest cell. On seeds 10 and 11 the trained learners ARF and SRP reach
68.0 and 69.7, against 83.9 for reset and 86.2 for \method{} with fixed segments. On the six standard generators,
whose learning curves are flat after about 200 rows, the predicted gain is at most 0.4 points and the observed gain
over reset is 0.1 points with fixed segments (computed from unrounded means, 95.0 against 94.8 after rounding): where relearning is cheap, memory has little to add, as \cref{cor:flat} predicts.

\subsection{RQ2: Recovering Gains with Unknown Boundaries}
\label{sec:rq2}

Detected segmentation preserves recall gains without knowing the boundary phase. On 72 new
grid streams with random boundary offsets, \method{} is above reset on 70, by $+2.85$
points on average: $+5.17$ with 500-row blocks (95\% interval $[+4.59,+5.72]$), $+1.48$ with 2\,000-row blocks and
$+1.91$ with variable blocks, against $+0.66$, $+0.74$ and $+0.65$ for fixed segmentation, as \cref{fig:segments}b shows.
On 72 grid confirmation streams from seeds 50 to 53, \method{} is above reset on all 72, by $+2.69$ points. Detected segmentation recovers what
fixed segmentation lost, $+6.55$ points on the shifted grid and $+1.74$ on the misaligned schedules, and leaves the
aligned grid and the real streams unchanged, as \cref{fig:segments}a shows.

\begin{figure*}[t]
  \centering
  \includegraphics[width=\textwidth]{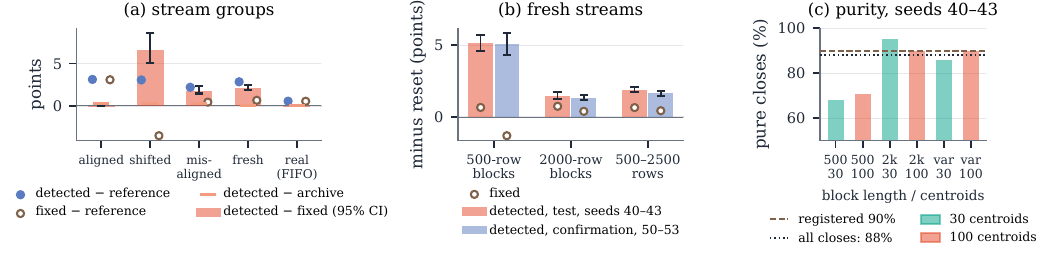}
  \caption{Detected against fixed segmentation. (a) Their difference per stream group. (b) Gain over reset on fresh
  streams with random boundary phases, test and confirmation seeds. (c) Share of segment closes that are at least 95\%
  pure. The dashed line marks the target share of 90\%.}
  \Description{Panels compare detected and fixed segmentation across stream groups, recall gains on test and confirmation streams, and segment purity.}
  \label{fig:segments}
\end{figure*}

\begin{figure*}[t]
  \centering
  \includegraphics[width=\textwidth]{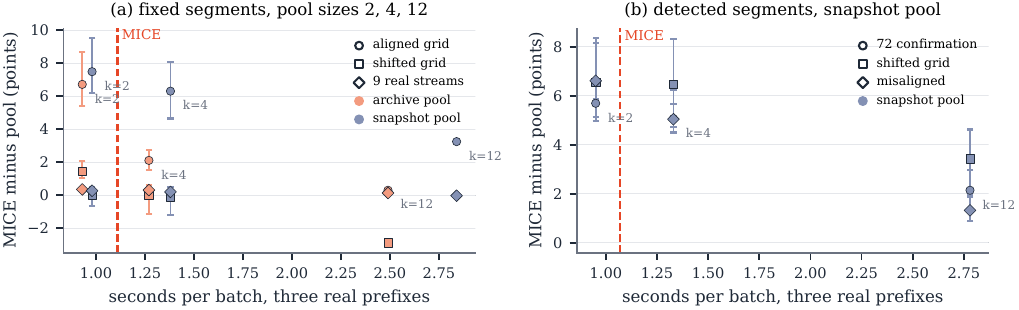}
  \caption{\method{} minus unmerged pools (archive, snapshot) of sizes 2, 4 and 12 against their time per batch. The
  dashed line is the time of \method{}. (a) Fixed segmentation. (b) Detected segmentation.}
  \Description{Accuracy differences between MICE and archive or snapshot pools are plotted against time per batch for fixed and detected segments.}
  \label{fig:pools}
\end{figure*}

The alignment diagnostic explains this recovery, because boundaries inside fixed segments mix concepts and reduce
recall gains. When the boundaries of the eight aligned grid streams move 200 rows into the segments, the gain of fixed
segmentation over reset falls by 6.8 points on average, and it is below reset on 7 of the 8 streams. On twelve
streams with misaligned block schedules it is above reset on 8, by $+0.47$ points, as test~2 of \cref{fig:diag} shows
and \cref{fig:phase} details per stream. Restarting the segments at the known phase of the boundaries restores the
gain ($+3.01$ against $+3.09$ points), which shows that the loss lies in the phase of the segments.

Segment purity matters because a late detection mixes the rows of two concepts in one segment. The detector declared 570 of the 576 changes of the test streams and 575 of the 576 of the confirmation
streams, each in the batch that holds the boundary or the next one. Every stored context was majority-pure, and 88\% of the segment closes held at
least 95\% rows of one concept (87\% on the confirmation streams). The impure closes concentrate on 500-row
blocks, as \cref{fig:segments}c shows.

\subsection{RQ3: The Cost of Following Memory on Real Streams}
\label{sec:rq3}

Real-stream concepts and learning curves are unobserved, and the experts often have similar prediction losses. On the
grid the leading expert holds weight 1.000 from the second batch of a recurring block. On the real streams the median
margin, the gap in mean log-loss between the leading and the next-best expert on a labelled batch, is about 0.01
nats, the regime the slow level is built for. With detected segmentation, \method{} is never more than 0.23 points
below FIFO across the 29 real streams and is above it on 24 ($+0.58$ on average). With fixed segmentation, the slow
level limits the largest shortfall of \method{} to 0.25 points, against 2.3 for \method{} w/o slow level. In
comparison, reset loses up to 9.6 points to FIFO and 1.8 on average, and the window ensemble up to 1.4, as
\cref{tab:main} reports and \cref{app:perstream} gives per stream. The largest gains occur on Rialto ($+4.7$, a daily cycle) and
the INSECTS streams ($+0.2$ to $+1.9$). In \cref{tab:main}, the trained baselines have lower mean accuracy on these
streams. Beyond accuracy, \method{} has the highest F1 and a lower Brier score than every baseline on the real
streams. LTM has the highest mean AUC, with its advantage concentrated on the PokerHand segments, where its
class-balanced buffer keeps rare classes.

The confirmation segments of Airlines, CoverType and PokerHand, run with fixed segmentation, reproduce the real-stream
result. \method{} is never more than 0.01 points below FIFO on any of the six segments and is not below reset. It is
more accurate than all four in-context baselines on five and has the highest mean (85.3 against 85.1 for FIFO). The
slow level turns the $-2.0$ and $-1.9$ points of \method{} w/o slow level on the two PokerHand segments into $+0.4$
and $+0.5$. The one-sided 95\% block-bootstrap bound of the difference to FIFO is above $-0.3$ points on 28 of the 29
real streams. On Spam, with 61 batches, non-inferiority is undecided. 

On the nine independent confirmation segments of Bitcoin Heist and CIC-IDS2017, detected segmentation keeps the
accuracy of fixed segmentation, from which the mean of \method{} differs by less than 0.01 points. On every segment
the one-sided 95\% bootstrap lower bound on its accuracy difference to FIFO is at least $-0.25$ points, and its
observed accuracy is never more than 0.18 points below the better of FIFO and reset. It is above its pool-free replay
on eight of nine segments. Mean accuracies over the nine segments differ by less than 0.4 points across the
in-context policies, at about 98.5\%, and \method{} is the most accurate of the in-context policies on four of the
nine segments.

Replacing the half Brier score with the log-loss can harm the slow level, because one confidently wrong batch after a
switch outweighs the next ten. With the log-loss in place of the half Brier score, the accuracy of \method{} on one
grid stream falls from 86.4 to 74.9. On the grid with fixed segments, the step $\eta_2=\tfrac12$ covered by
\cref{prop:outer} has an ECE 3.1 points above the step of 10. The largest discounted half-Brier regret of
\method{} against FIFO over 33 development streams is 0.15, within the bound of 1.39. 

\subsection{RQ4: Memory Organisation and Resource Tradeoffs}
\label{sec:ablation}

Removing stored contexts lowers accuracy on all 24 grid streams (by 5.6 points on average) and on 28 of the 29 real
streams (by 0.35) in this fixed-segmentation ablation. Without them, the two levels mix only the three FIFO windows.

 Merging keeps fewer experts than two pools that never merge and share the windows and two-level rule of \method{}
with a pool size of 12. An \emph{archive} keeps every closed segment as its own expert, and a \emph{snapshot} pool
stores the latest 1000 rows at every close. With fixed segments, \method{} is above the archive on the aligned grid by
$+0.30$ points and within 0.1 points of both pools on the real streams, with fewer experts (8.4 against 14.2 per
batch) and TFM calls (9.7 against 14.0).

Capped to the time per batch of \method{}, both
fixed-segmentation pools are less accurate than \method{}, by 6.7 and 7.5 points on the aligned grid and 0.35 and
0.27 on nine real streams. On the shifted grid, the time-matched snapshot pool is level with \method{}, and the uncapped archive is ahead of it, as \cref{fig:pools}a shows.

With detected segments on the fresh grid streams, \method{} keeps the pool at a third of the archive's experts and is
0.11 points more accurate. On the grid confirmation streams it exceeds a snapshot pool of equal capacity by 2.14
points and is level with it only on 2\,000-row blocks, as \cref{fig:pools}b shows. On 144 pairs of grid segments the recall
regret separates same from different concepts with AUROC 1.00.  

Runtime and peak GPU memory grow with the rows that a policy conditions on, and \method{} is compared with FIFO
windows of up to 13\,400 rows. \method{} with fixed segments takes 1.07\,s per batch on three real prefixes on one
GPU, against 0.22\,s for FIFO with 1000 rows, 0.42\,s with 3000 rows and 1.91\,s with 13\,400, as plotted in
\cref{app:perstream}. Peak GPU memory is 0.8 to 26.4\,GB for FIFO and 1.0\,GB for \method{}. A 3000-row window, about 2.5 times
cheaper, is more accurate than fixed-segmentation \method{} on 21 of the 29 real streams. \method{} is ahead by
$+0.31$ points on average because of the sharply drifting streams, four INSECTS streams by $+3.3$ to $+5.5$ and
Rialto by $+6.9$, as \cref{fig:fifo3000} details per stream. These comparisons cover complete policies at different context budgets, whereas \cref{prop:outer} compares
\method{} with FIFO at $M$ rows.

\section{Conclusion}
\label{sec:conclusion}

A stored context lets a frozen TFM reuse a past concept without retraining. Under pure-context and margin
conditions, its value is the relearning it saves, a normalised learning-curve area up to a first-batch term. \method{}
builds contexts at detected changes and weights them by their last-batch log-loss. Its slow level keeps discounted
half-Brier regret against FIFO at most $2\ln2$ at every time. Learning curves predict the ordering of recall gains on
the grid. Detected segmentation recovers these gains under unknown boundary phases. On 29 real streams, \method{} has
the highest mean accuracy of all policies and is above FIFO on 24. Learning curves measured before deployment can
therefore indicate whether stored contexts will pay.
\clearpage
\bibliographystyle{ACM-Reference-Format}
\bibliography{refs,refs_r4}

\appendix

\section{Theory}
\label{app:theory}

\begin{table}[t]
  \centering
  \caption{Notation.}
  \label{tab:notation}
  \footnotesize
  \begin{tabular}{@{}lp{0.72\columnwidth}@{}}
    \toprule[1pt]
    \textbf{Symbol} & \textbf{Meaning} \\
    \midrule
    $(\vx,y)$ & a row and its label \\
    $B$, $M$ & batch size (100 rows), context budget (1000 rows) \\
    $\cC$, $\cC_k$ & a context, the stored context $k$ \\
    $f_\Theta(\cdot\mid\vx;\cC)$ & the frozen TFM \\
    $P$, $Q$ & concepts \\
    $\epsilon_P(\cC)$ & risk on $P$ of the TFM given $\cC$ \\
    $\Div_T$ & recall regret \\
    $S$, $\delta$ & a closed segment, merge threshold (0.02) \\
    $H_{\mathrm{cp}}$, $R_{\mathrm{cp}}$, $\delta_{\mathrm{cp}}$, $\tau$ & constants and margin of the change test \\
    $\pi$, $\nu$ & majority share of a context, distance of a declared change to its boundary \\
    $K$ & number of experts \\
    $\ell_k$, $\eta$, $w_k$ & last-batch loss, step (10) and weight of the fast level \\
    $g$ & margin of the leading expert \\
    $L$, $J$ & block length in rows and in batches \\
    $a_k(n)$, $n_c$ & learning curve of concept $k$, rows of its stored context on visit $c$ \\
    $\Delta_{k,c}(L)$ & gain of recall over relearning \\
    $L_{t,k}$, $\eta_2$, $\gamma$, $v_{t,k}$ & discounted loss, step ($\tfrac12$), discount (0.9) and weight of the slow level \\
    \bottomrule[1pt]
  \end{tabular}
\end{table}

This appendix states formally the results that \cref{sec:method} uses to justify the design of \method{}. The change
test \eqref{eq:cp}, the fast and slow levels \eqref{eq:fast}, \eqref{eq:outer-loss}--\eqref{eq:outer} and
\cref{prop:recall,prop:outer} are stated there. Proofs are in \cref{app:proofs}.

The risk of a context and the learning curve of a concept let stored contexts be compared with contexts relearned from new rows. For a concept $P$ over $(\vx,y)$ and a loss $\ell$, the \emph{risk} of a context $\cC$ is
\[
  \epsilon_P(\cC)=\E_{(\vx,y)\sim P}\,\ell\bigl(f_\Theta(\cdot\mid\vx;\cC),y\bigr),
\]
and the \emph{learning curve} of $P$ is $R_P(n)=\E_{\cC\sim P^n}\,\epsilon_P(\cC)$. We write $p_\vx=P(\cdot\mid\vx)$
for the conditional, $P_X$ for the marginal. For a predictor $h$, $\epsilon_P(h)$ is its risk on $P$, which makes
$\epsilon_P(\cC)$ the risk of the predictor $h=f_\Theta(\cdot\mid\cdot\,;\cC)$. \Cref{def:excess} uses the population
predictors of \cref{ass:ppd}, and \eqref{eq:dt-hat} estimates it from finite contexts.

\subsection{Which Contexts Share a Concept}
\label{sec:th-disc}

We compare two concepts through the TFM conditioned on their contexts. Its population behaviour follows from the principle behind prior-fitted networks, which approximate the posterior predictive distribution \citep{muller2022pfn,nagler2023statistical}.

\begin{assumption}[Posterior-predictive limit]\label{ass:ppd}
With a context from $P$ alone, the TFM predicts the conditional $p_\vx$. With contexts of equal size from $P$ and
$Q$, it predicts the conditional of the mixture $U=\tfrac12(P+Q)$,
\[
  u_\vx=\alpha(\vx)\,p_\vx+\bigl(1-\alpha(\vx)\bigr)\,q_\vx,\qquad \alpha=\frac{P_X}{P_X+Q_X}.
\]
We denote the two predictors by $h_P$ and $h_U$.
\end{assumption}

\begin{definition}[Recall regret]\label{def:excess}
For two concepts $P$ and $Q$, the \emph{recall regret} of $P$ on $Q$ is the risk that the context of $P$ incurs on $Q$
beyond that of the context of $Q$ itself,
\begin{equation}\label{eq:transfer}
  \Div_T(Q\,|\,P)=\epsilon_Q(h_P)-\epsilon_Q(h_Q).
\end{equation}
\end{definition}

\subsection{Which Rows Form a Concept}
\label{sec:th-seg}

\Cref{prop:recall} in \cref{sec:m-recall} compares relearning with the stored expert \emph{of concept $k$}. A stored context is built
from segments of the stream, and a segment that straddles a concept boundary holds rows of two concepts. The next
results say how much mixing identification tolerates, how reliably a change detector declares changes, and how
much mixing remains when its declarations lie close to the boundaries.

\begin{proposition}[Majority purity]\label{prop:purity}
Let \cref{ass:ppd} extend to unequal shares: a context whose rows come from $P$ in proportion $\pi$ and from $Q$ in
proportion $1-\pi$ predicts the conditional of $\pi P+(1-\pi)Q$. Let $P_X=Q_X$, let both concepts be deterministic,
$y=g_P(\vx)$ and $y=g_Q(\vx)$, and let $\ell$ be the 0-1 loss. Then the recall regret of this context on $P$ is
\[
  \Div_T\bigl(P\,|\,\pi P+(1-\pi)Q\bigr)=
  \begin{cases}
    0, & \pi>\tfrac12,\\
    \rho, & \pi<\tfrac12,
  \end{cases}
\]
where $\rho=\Pr_\vx[g_P(\vx)\ne g_Q(\vx)]$. At $\pi=\tfrac12$ it depends on the tie rule.
\end{proposition}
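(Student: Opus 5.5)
The plan is to compute the predictor induced by the mixed context explicitly, then read off its 0-1 risk on $P$, and compare it with the risk of $h_P$. By the extended \cref{ass:ppd}, a context with share $\pi$ from $P$ predicts the conditional of $W=\pi P+(1-\pi)Q$. Since $P_X=Q_X$, the mixing weight is constant in $\vx$. The analogue of $\alpha$ becomes $\pi P_X/(\pi P_X+(1-\pi)Q_X)=\pi$, so
\[
  w_\vx=\pi\,e_{g_P(\vx)}+(1-\pi)\,e_{g_Q(\vx)} ,
\]
using that both conditionals are point masses for deterministic concepts.

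Under the 0-1 loss the predictor outputs the class of largest predicted probability. I will split the inputs into two sets: $A=\{\vx: g_P(\vx)=g_Q(\vx)\}$ and its complement, which has $P_X$-mass $\rho$. On $A$, $w_\vx=e_{g_P(\vx)}$, so the prediction is correct for $P$ for every $\pi$. Off $A$, $w_\vx$ puts mass $\pi$ on $g_P(\vx)$ and $1-\pi$ on $g_Q(\vx)$. It therefore predicts $g_P(\vx)$ when $\pi>\tfrac12$ (error $0$ under $P$) and $g_Q(\vx)\ne g_P(\vx)$ when $\pi<\tfrac12$ (error $1$). At $\pi=\tfrac12$ the two classes tie and the outcome depends on the tie rule, which gives the last sentence of the statement. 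Integrating over $P_X$ yields $\epsilon_P(h_W)=0$ for $\pi>\tfrac12$ and $\epsilon_P(h_W)=\rho$ for $\pi<\tfrac12$.

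It remains to subtract the reference term $\epsilon_P(h_P)$ in \cref{def:excess}. By \cref{ass:ppd}, $h_P$ predicts $e_{g_P(\vx)}$, so its 0-1 risk on the deterministic concept $P$ is $0$. Hence $\Div_T(P\,|\,W)=\epsilon_P(h_W)$, which is the claimed case formula.

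The only delicate point is interpretive rather than computational. The assumption must be read as saying that the mixture conditional uses input-dependent weights, as in $u_\vx$ of \cref{ass:ppd}. It is precisely $P_X=Q_X$ that makes these weights the constant $\pi$, so the threshold $\tfrac12$ is uniform over $\vx$. I would also state that the 0-1 loss is applied to the argmax of the predicted distribution, so that a probabilistic prediction has a well-defined 0-1 risk. Without equal marginals, the threshold would vary with $\vx$ and the clean dichotomy would fail.
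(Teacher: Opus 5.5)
Your proposal is correct and follows the paper's proof essentially step for step. With $P_X=Q_X$ the mixed context predicts $\pi e_{g_P(\vx)}+(1-\pi)e_{g_Q(\vx)}$; its mode is $g_P(\vx)$ where the concepts agree and is decided by whether $\pi>\tfrac12$ where they differ; and $\epsilon_P(h_P)=0$ then gives the regret. Your explicit remarks that the input-dependent mixing weight reduces to the constant $\pi$ and that the 0-1 risk is taken at the argmax make precise what the paper leaves implicit.
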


The false-alarm bound of \cref{prop:cp} concerns the test \eqref{eq:cp} of \cref{sec:m-lib} under the following assumption on the window's errors.

\begin{assumption}[Window error]\label{ass:err}
Given all earlier rows and errors, the window errs on the rows of a batch independently, each with the conditional
error rate of that batch, and almost surely between two changes this rate does not increase from batch to batch.
\end{assumption}

\begin{proposition}[Detection]\label{prop:cp}
Under \cref{ass:err}, if no change occurs in the batches $t-H_{\mathrm{cp}}-R_{\mathrm{cp}},\dots,t-1$,
  the probability that \eqref{eq:cp} declares one at $t$ is at most $(R_{\mathrm{cp}}-1)H_{\mathrm{cp}}\delta_{\mathrm{cp}}$.
\end{proposition}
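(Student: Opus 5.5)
The plan is a union bound over the finitely many comparisons that \eqref{eq:cp} can make at time $t$, combined with a martingale Hoeffding (Azuma) bound for each single comparison. The threshold $\tau(n_1,n_2)$ is designed to make that bound exactly $\delta_{\mathrm{cp}}$.

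\emph{Counting the comparisons.} A declaration at $t$ requires a positive excess for some candidate $b\in\{t-H_{\mathrm{cp}},\dots,t-1\}$, which gives $H_{\mathrm{cp}}$ choices. For a given $b$, the reference window consists of the $r$ batches immediately before $b$. Here $r$ is at least $2$ and at most $R_{\mathrm{cp}}$, and it is truncated by the previous declared change. That truncation is random and depends on the past, so I will not condition on it. Instead I fix $r\in\{2,\dots,R_{\mathrm{cp}}\}$, which gives $R_{\mathrm{cp}}-1$ choices. For each fixed pair $(b,r)$ I bound the event $E_{b,r}$ that the excess computed with $n_1=rB$ and $n_2=(t-b)B$ is positive. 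The event that the test fires is contained in $\bigcup_{b,r}E_{b,r}$. Hence it suffices to show $\Pr(E_{b,r})\le\delta_{\mathrm{cp}}$ for every pair. All batches involved lie in $\{b-r,\dots,t-1\}\subseteq\{t-H_{\mathrm{cp}}-R_{\mathrm{cp}},\dots,t-1\}$, so by hypothesis no change occurs among them.

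\emph{One comparison.} Order the rows of batches $b-r,\dots,t-1$ in time, and within each batch in any fixed order. Let $\mathcal F$ be the filtration generated by all earlier rows and errors. By \cref{ass:err}, each error indicator $Z_i$ is Bernoulli with a rate $p_{s(i)}$ that is determined by the past of its batch $s(i)$. Define the weight $c_i=1/n_2$ for rows in the later window and $c_i=-1/n_1$ for rows in the reference window. Then
\[
  \frac1{n_2}\sum_{s=b}^{t-1}z_s-\frac1{n_1}\sum_{s=b-r}^{b-1}z_s
  =\sum_i c_i\bigl(Z_i-p_{s(i)}\bigr)+\Bigl(\frac1{n_2}\sum_{s\ge b}Bp_s-\frac1{n_1}\sum_{s<b}Bp_s\Bigr).
\]
The second bracket compares the average rate after $b$ with the average rate before $b$. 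Since no change occurs and the rates are almost surely non-increasing from batch to batch, every later rate is at most every earlier rate, so this bracket is $\le0$ almost surely. The first sum is a martingale whose increments $c_i(Z_i-p_{s(i)})$ lie in intervals of length $|c_i|$. Azuma--Hoeffding therefore gives
\[
  \Pr\Bigl(\sum_i c_i(Z_i-p_{s(i)})>\tau\Bigr)\le\exp\Bigl(-\frac{2\tau^2}{\sum_i c_i^2}\Bigr)=\exp\Bigl(-\frac{2\tau^2}{1/n_1+1/n_2}\Bigr)=\delta_{\mathrm{cp}},
\]
where the last equality uses the choice of $\tau$. So $\Pr(E_{b,r})\le\delta_{\mathrm{cp}}$. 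Summing over the $H_{\mathrm{cp}}(R_{\mathrm{cp}}-1)$ pairs gives the stated bound.

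\emph{Main obstacle.} The delicate point is that the rates $p_s$ are themselves random and adapted, since the window changes with the data. The plain Hoeffding inequality therefore does not apply, and I have to use the martingale form. The monotonicity in \cref{ass:err} must hold almost surely, so that the drift bracket can be dropped pathwise before the concentration bound is taken. A second point to check is that the random truncation at the previous declared change only removes comparisons, or selects one of the fixed $r$; it never creates a comparison outside the union. This is what justifies taking the union over deterministic $r$ rather than conditioning on the data-dependent $n_1$.
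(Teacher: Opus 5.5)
Your proof is correct and matches the paper's argument essentially step for step. You take a union bound over the $H_{\mathrm{cp}}$ candidates $b$ and the $R_{\mathrm{cp}}-1$ fixed reference lengths (which is how the paper also handles the data-dependent truncation at the previous declared change), apply Azuma--Hoeffding to the centred error indicators with ranges $1/n_2$ and $1/n_1$, and use the almost-sure monotonicity of \cref{ass:err} to make the drift term nonpositive, so that the choice of $\tau(n_1,n_2)$ gives exactly $\delta_{\mathrm{cp}}$ per comparison.
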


\begin{corollary}[Segment purity]\label{cor:cp}
If every declared change lies within $\nu$ rows of a concept boundary and no boundary is missed, then a closed
segment $S$ holds at most $2\nu$ rows of other concepts, and it is majority-pure when $4\nu<|S|$.
\end{corollary}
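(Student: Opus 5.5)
The statement to prove is \cref{cor:cp}, and the plan is purely combinatorial: I will count the foreign rows a closed segment can contain from its two endpoints. A closed segment $S$ is a contiguous run of rows. Its left end is either the start of the stream, a declared change, or a fixed-length closure point. Its right end is either a declared change or a closure after 500 rows. I will first use the hypothesis that no boundary is missed, together with every declaration lying within $\nu$ rows of a boundary, to argue that every true concept boundary is matched by a declared change within distance $\nu$.

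The key step is to show that $S$ meets at most two concepts in any non-negligible way. Suppose a true boundary lies strictly inside $S$ at distance more than $\nu$ from both ends. Then its matching declared change would also lie strictly inside $S$. That contradicts the fact that the segment ends at the first declared change after its start. Hence every boundary inside $S$ lies within $\nu$ rows of the left end or of the right end. This needs a short case check that declarations always end the open segment, which is how \cref{sec:m-lib} describes the procedure.

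Next I take the concept that occupies the interior of $S$. Rows of any other concept can only appear in the stretch of at most $\nu$ rows adjacent to the left end, or in the stretch of at most $\nu$ rows adjacent to the right end. That gives at most $2\nu$ foreign rows.

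I expect the main obstacle to be an endpoint that is a fixed-length closure rather than a declared change. Near such an endpoint, a boundary could in principle sit with its declaration falling after the closure, which would push foreign rows into the next segment. I would handle this by noting that the declaration lies within $\nu$ of the boundary. The foreign rows either precede the boundary, in which case they are within $\nu$ of the segment's start, or they follow it, in which case the declaration cuts them off within $\nu$. Either way each end still contributes at most $\nu$ foreign rows. If this argument fails for pathological short blocks, I would add the implicit assumption that blocks are longer than $2\nu$.

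Finally, majority purity follows directly. The majority concept holds at least $|S|-2\nu$ rows, and $|S|-2\nu>|S|/2$ exactly when $4\nu<|S|$. So $\pi>\tfrac12$, and \cref{prop:purity} then gives zero recall regret in the population limit.
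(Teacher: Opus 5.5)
Your proposal is correct and follows the paper's own argument: rows of other concepts can only lie between an end of $S$ and the nearest true boundary, at most $\nu$ per end, and $|S|-2\nu>|S|/2$ is exactly $4\nu<|S|$. Your key step (a boundary more than $\nu$ inside $S$ would force a declared change strictly inside $S$) makes explicit what the paper asserts in one line, and it already covers periodic-close ends and short blocks, so your obstacle paragraph and the block-length fallback are unnecessary; it only needs your case check to confirm that a retroactive declaration (up to $H_{\mathrm{cp}}$ batches back) never lands inside an already closed segment, which the three-batch wait before a periodic close and the two-batch reference rule of the change test guarantee.
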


\subsection{The Fast Level and When Recall Pays}
\label{sec:th-fast}

The fast level of \method{} restarts every batch with the weights \eqref{eq:fast}. Its guarantee comes from a margin.

\begin{assumption}[Margin]\label{ass:margin}
One expert $k^\star$ leads the last labelled batch by a margin $g>0$: $\ell_{k^\star}\le\ell_k-g$ for all
$k\ne k^\star$.
\end{assumption}

\begin{lemma}[Margin]\label{lem:margin}
Under \cref{ass:margin}, let $\kappa=\ln\bigl(1+(K-1)e^{-\eta g}\bigr)$. The mixture with the weights
\eqref{eq:fast} satisfies:
\begin{enumerate}
  \item[(i)] the leader holds almost all weight, $w_{k^\star}\ge e^{-\kappa}$.
  \item[(ii)] on every row, the log-loss of the mixture is at most that of expert $k^\star$ plus $\kappa$.
  \item[(iii)] the mixture predicts the class of expert $k^\star$ on every row where the top two probabilities of
  that expert differ by more than $e^{\kappa}-1$.
\end{enumerate}
\end{lemma}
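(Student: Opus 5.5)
Write $\ell_k$ for the last-batch log-loss of expert $k$, so $w_k\propto e^{-\eta\ell_k}$. The plan is to work directly with these unnormalised exponential weights and treat the three parts in order, each building on the previous one.

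\emph{Part (i).} Since $w_{k^\star}=e^{-\eta\ell_{k^\star}}/\sum_k e^{-\eta\ell_k}$, dividing numerator and denominator by $e^{-\eta\ell_{k^\star}}$ gives
\[
  \frac{1}{w_{k^\star}}=1+\sum_{k\ne k^\star}e^{-\eta(\ell_k-\ell_{k^\star})}.
\]
By \cref{ass:margin}, each exponent satisfies $\ell_k-\ell_{k^\star}\ge g$. Hence each of the $K-1$ terms is at most $e^{-\eta g}$, so $1/w_{k^\star}\le 1+(K-1)e^{-\eta g}=e^{\kappa}$, which is (i).

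\emph{Part (ii).} Fix a row with label $y$. The mixture assigns $y$ the probability
\[
  p(y)=\sum_k w_k p_k(y)\ge w_{k^\star}p_{k^\star}(y).
\]
Taking $-\ln$ and applying (i) yields $-\ln p(y)\le -\ln p_{k^\star}(y)+\kappa$. Only nonnegativity of the other terms is needed, so the bound holds row by row, as stated.

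\emph{Part (iii).} This is the step that needs care. Let $c$ be the top class of $k^\star$ and $c'$ any other class, and write $s=1-w_{k^\star}$ for the total weight on the other experts; by (i), $s\le 1-e^{-\kappa}$. Because the other experts' probabilities lie in $[0,1]$, their combined contribution to $p(c')-p(c)$ is at most $s$. Therefore
\[
  p(c)-p(c')\ge w_{k^\star}\bigl(p_{k^\star}(c)-p_{k^\star}(c')\bigr)-s.
\]
This is positive whenever the top-two gap $d$ of $k^\star$ exceeds $s/w_{k^\star}$. Using (i) again,
\[
  \frac{s}{w_{k^\star}}\le\frac{1-e^{-\kappa}}{e^{-\kappa}}=e^{\kappa}-1=(K-1)e^{-\eta g}.
\]
So $d>e^{\kappa}-1$ forces the mixture's argmax to be $c$, matching the threshold stated in \cref{sec:m-fast}.

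The main obstacle is keeping this bound tight. One must use that the other experts contribute at most $s$ to the difference, not $2s$, and control $s/w_{k^\star}$ rather than $s$ alone. The last equality, $e^{\kappa}-1=(K-1)e^{-\eta g}$, is exactly what the definition of $\kappa$ delivers, so the argument closes without slack.
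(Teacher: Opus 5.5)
Your proof is correct and follows the paper's argument essentially step for step. In (i) you normalise by the leader's unnormalised weight and bound each of the $K-1$ ratios by $e^{-\eta g}$; in (ii) you drop the nonnegative contributions of the other experts; in (iii) you bound their contribution to the pairwise difference by $1-w_{k^\star}$ and then use $(1-w_{k^\star})/w_{k^\star}\le e^{\kappa}-1$.
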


\begin{proposition}[One batch of identification]\label{prop:block}
Consider a block of $J$ batches of one concept, with log-losses clipped at $c$, and let \cref{ass:margin} hold for
every labelled batch of the block. Then the regret of the fast level per row, summed over the batches of the block,
satisfies
\[
  \Regret_{\mathrm{ME}}\;\le\;c+(J-1)(K-1)\,e^{-\eta g} .
\]
\end{proposition}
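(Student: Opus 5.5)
Within a block, the regret is compared against the expert that is best on each batch, and we split it into the first batch and the remaining $J-1$ batches. Following the fast-level description in \cref{sec:m-fast}, $\Regret_{\mathrm{ME}}$ is the sum over the batches of the block of the mixture's per-row log-loss minus that of the best expert on that batch. Because the per-row log-loss is clipped at $c$ and nonnegative, the mixture's mean log-loss on any batch is at most $c$. The best expert's loss is nonnegative, so the contribution of any single batch to the regret is at most $c$.

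\textbf{First batch.} The weights used on the first batch of the block come from the last labelled batch, which belongs to the previous block, because labels arrive one batch late. These weights carry no information about the current concept. We bound this batch crudely by $c$, using the clipping argument above, and this gives the first term.

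\textbf{Batches $2,\dots,J$.} For each of these batches the weights are computed from the preceding labelled batch, which lies inside the block. By hypothesis, \cref{ass:margin} holds there with leader $k^\star$, and we apply \cref{lem:margin}(ii). On every row, the mixture's log-loss is at most that of $k^\star$ plus $\kappa=\ln(1+(K-1)e^{-\eta g})$, and $\kappa\le(K-1)e^{-\eta g}$ because $\ln(1+x)\le x$. We average over the rows of the batch and sum over the $J-1$ batches. This yields at most $(J-1)(K-1)e^{-\eta g}$ beyond the cumulative loss of the leaders.

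\textbf{Main obstacle.} The comparator for batch $j$ has to be the leader $k^\star$ that the fast level tracks. The leader is identified on batch $j-1$, and the comparator's loss is evaluated on batch $j$. The regret therefore has to be read against the expert selected by the margin on each labelled batch, meaning the sequence of leaders, as in the block regret of \cref{sec:m-fast}.

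If the statement is meant against a single fixed expert on one concept, we need the leader to be the same expert across batches. One way is to assume that the margin holds with a common leader throughout the block. Another is to note that within one concept the leading expert is the one with the larger learning-curve value, as in \cref{prop:recall}. I would state the comparator as this per-batch leader and use the clipping bound at $c$ only for the first batch. This avoids any need to control losses on the batch where the leader could change.

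Summing the two parts gives $\Regret_{\mathrm{ME}}\le c+(J-1)(K-1)e^{-\eta g}$.
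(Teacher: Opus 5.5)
Your proposal is correct and follows the paper's proof exactly: you bound the first batch by $c$ through clipping, bound each of the $J-1$ later batches by $\ln\bigl(1+(K-1)e^{-\eta g}\bigr)\le(K-1)e^{-\eta g}$ through \cref{lem:margin}(ii), and sum the two parts. Your comparator discussion makes explicit what the paper leaves implicit, namely that batch $j$ is compared with the leader $k^\star$ of the preceding labelled batch; drop your opening phrase ``best expert on that batch'', because \cref{lem:margin}(ii) does not bound regret against that expert when the leader changes.
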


\begin{corollary}[When memory pays]\label{cor:flat}
On the first visit $n_1=0$ and the area term of \eqref{eq:area} is zero. On later visits it is zero whenever the
learning curve is flat, $a_k(B)=a_k(M)$, and in general
\[
  \Delta_{k,c}(L)\;\le\;\frac{M}{L}\,\bigl(a_k(M)-a_k(B)\bigr)+\frac1J .
\]
\end{corollary}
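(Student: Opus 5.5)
The statement to prove is \cref{cor:flat}, and I would read everything off the identity \eqref{eq:area} of \cref{prop:recall}. I take its hypotheses (pure stored context, margin on every labelled batch) as given, so that $\Delta_{k,c}(L)$ equals the area term plus $\zeta_{k,c}$ with $|\zeta_{k,c}|\le 1/J$. The corollary's first two claims concern only the area term. The general bound then adds $1/J$ for $\zeta_{k,c}$.

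\emph{First visit.} Here $c=1$ gives $n_1=\min(M,0)=0$. Since $a_k$ is non-decreasing and $\min(jB,M)\ge B>0$ for $j\ge1$, every summand satisfies $a_k(0)-a_k(\min(jB,M))\le0$. Its positive part is therefore zero, and so is the area term.

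\emph{Flat curve.} Suppose $a_k(B)=a_k(M)$. Monotonicity forces $a_k$ to be constant on $[B,M]$. For $j\ge1$ the index $\min(jB,M)$ lies in $[B,M]$, and $n_c\le M$. So each summand is $[a_k(n_c)-a_k(M)]_+\le[a_k(M)-a_k(M)]_+=0$, again by monotonicity.

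\emph{General bound.} Each summand is at most $a_k(M)-a_k(\min(jB,M))\le a_k(M)-a_k(B)$, using $n_c\le M$ and $\min(jB,M)\ge B$. Only the indices with $jB<M$ can contribute, because for $jB\ge M$ the summand is $[a_k(n_c)-a_k(M)]_+=0$. There are fewer than $M/B$ such indices $j$, and the prefactor is $1/J=B/L$. Hence the area term is at most $\frac{B}{L}\cdot\frac{M}{B}\,(a_k(M)-a_k(B))=\frac{M}{L}(a_k(M)-a_k(B))$. Adding $|\zeta_{k,c}|\le1/J$ gives the claim.

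The only delicate step is the count of contributing indices, namely $\#\{j\ge1:jB<M\}\le\lceil M/B\rceil-1<M/B$. When the block is shorter than $M$, I would also use $J-1$ as a cap, since that only tightens the bound. Everything else is monotonicity of $a_k$.
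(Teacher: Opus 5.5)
Your proof is correct and follows the paper's own argument essentially step for step. In both, monotonicity of $a_k$ kills every positive part on the first visit and for a flat curve; each term is bounded by $a_k(M)-a_k(B)$; only indices with $jB<M$ contribute, fewer than $M/B$ of them; and the prefactor $1/J=B/L$ together with $|\zeta_{k,c}|\le 1/J$ gives the bound. Your explicit count $\lceil M/B\rceil-1<M/B$ is just a slightly more careful version of the paper's ``at most $M/B$ of the $J-1$ terms.''
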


\subsection{The Slow Level: A Safeguard at Any Time}
\label{sec:th-slow}

For $\gamma=1$ the slow level is the exponentially weighted average forecaster with an exp-concave loss
\citep{cesabianchi2006prediction}, and $\gamma<1$ forgets losses older than about $1/(1-\gamma)$ batches.
\Cref{prop:outer} bounds its regret. Synthetic expert predictions and labels
chosen to hurt the mixture or the current leader ($K\in\{2,5\}$, $\gamma\in\{0.5,0.9,0.99,1\}$, 400 rounds) keep the
discounted regret below $\ln K/\eta_2$ at $\eta_2=\tfrac12$ in every configuration (at most 0.67 for $K=2$). The
regret exceeds this bound at $\eta_2=2$ for $\gamma\ge0.9$ and reaches 4.4 at $\eta_2=10$, $\gamma=0.9$.

\section{Algorithm and Implementation}
\label{app:impl}

\begin{algorithm}[t]
\caption{\method{} for one batch $t\ge1$}\label{alg:mice}
\begin{algorithmic}[1]
\REQUIRE batch $X_t$, labels $y_{t-1}$, windows $\mathcal W$, pool $\mathcal P$, predictions $\hat p_{t-1,k}$,
  $\hat p^{\mathrm{fast}}_{t-1}$, $\hat p^{\mathrm{def}}_{t-1}$ cached for batch $t-1$ before $y_{t-1}$ arrived
\ENSURE prediction $\hat p_t$ for $X_t$
\STATE \textit{// 1. Score the predictions issued for batch $t-1$}
\IF{$t=1$}
  \STATE $\ell_k\leftarrow0$ for every expert $k$, and $L_{0,\mathrm{def}},L_{0,\mathrm{fast}}\leftarrow0$
\ELSE
  \STATE $\ell_k\leftarrow$ mean log-loss of $\hat p_{t-1,k}$ on $y_{t-1}$, for every expert $k$
  \STATE $L_{t-1,j}\leftarrow\gamma L_{t-2,j}+{}$mean half Brier score of $\hat p^{j}_{t-1}$, for
    $j\in\{\mathrm{def},\mathrm{fast}\}$
\ENDIF
\STATE \textit{// 2. Update the contexts}
\STATE add batch $t-1$ to every window in $\mathcal W$
\IF{\eqref{eq:cp} declares a change on the errors of the 300-row window, \textbf{or} the open segment has 500 rows
  and three more batches are labelled}
  \STATE close the open segment $S$ \hfill\textit{// kept only if $|S|\ge300$}
  \STATE $k^\star\leftarrow\argmin_{k\in\mathcal P}\widehat{\Div}_T(S\,|\,\cC_k)$
  \IF{$\widehat{\Div}_T(S\,|\,\cC_{k^\star})\le\delta$}
    \STATE merge $S$ into $\cC_{k^\star}$ \hfill\textit{// keeps its identity and losses}
  \ELSE
    \STATE add $S$ to $\mathcal P$ as a new expert $k'$ with $\ell_{k'}\leftarrow\operatorname{median}_k\ell_k$
  \ENDIF
\ENDIF
\STATE \textit{// 3. Predict}
\STATE $\hat p_{t,k}\leftarrow f_\Theta(\cdot\mid X_t;\cC_k)$ for every $k\in\mathcal W\cup\mathcal P$
\STATE $\hat p^{\mathrm{fast}}_t\leftarrow\sum_k w_k\,\hat p_{t,k}$, \ $w_k\propto e^{-\eta\ell_k}$
  \hfill\textit{// fast level}
\STATE $\hat p^{\mathrm{def}}_t\leftarrow\hat p_{t,M}$ \hfill\textit{// FIFO}
\STATE $\hat p_t\leftarrow v_{\mathrm{def}}\,\hat p^{\mathrm{def}}_t+v_{\mathrm{fast}}\,\hat p^{\mathrm{fast}}_t$, \
  $v_j\propto e^{-\eta_2\gamma L_{t-1,j}}$ \hfill\textit{// slow level}
\STATE cache $\hat p_{t,k}$, $\hat p^{\mathrm{fast}}_t$ and $\hat p^{\mathrm{def}}_t$, and \textbf{return} $\hat p_t$
\end{algorithmic}
\end{algorithm}

\Cref{alg:mice} gives one batch of \method{}. Before $t=1$, batch 0 is observed with its labels and predicted by no
one. It only fills the windows and counts towards the first segment. At $t=1$ nothing has been issued yet, which
makes the fast weights uniform, and $L_{0,k}=0$ makes the slow weights uniform. Scoring of cached predictions begins
at $t=2$, where $L_{1,k}=\gamma L_{0,k}+\ell_{1,k}$. A stored expert created at a segment close has issued no
prediction for that batch and starts with the median of the other experts' losses. A merged expert keeps its identity
and its losses.

\begin{figure*}[t]
  \centering
  \includegraphics[width=\textwidth]{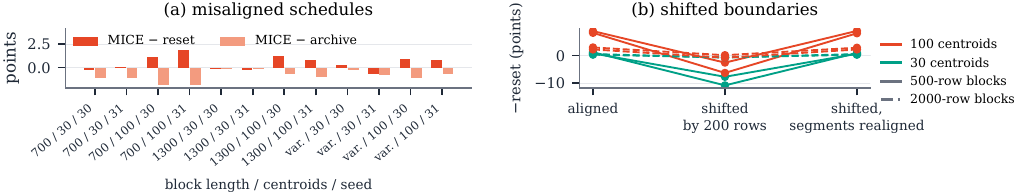}
  \caption{Boundaries inside fixed segments lower the gain of stored contexts over reset, and realigning the segments restores it.
  (a) Misaligned block schedules. (b) Boundaries shifted by 200 rows, and segments realigned.}
  \Description{Per-stream recall gains for aligned, shifted and realigned segment boundaries.}
  \label{fig:phase}
\end{figure*}

\section{Registration and Development}
\label{app:prereg}

Hypotheses, data and decision rules were written down before each test, every test stream was run once, and all
constants were set on synthetic development streams. All reported numbers are replays of cached expert predictions
with weights that use earlier labels only, which makes them equal to those of an online run.

The real streams were opened in five stages, of which the first three developed the slow level and the fourth was
registered for fixed segmentation. The main text calls the third held-out set the held-out segments and the fourth
held-out set the confirmation segments. The second held-out set consisted of all streams of the public collection
with at least 5000 rows, at most ten classes and no earlier use. No method had loaded the nine independent
confirmation segments of Bitcoin Heist and CIC-IDS2017 before their test. Detected segmentation was registered and
tested separately. \method{} w/o slow level met non-inferiority to reset on the first two held-out sets but was the
most accurate method on only one of six streams of the second, which motivated the slow level. From then on these
streams count as development data. On the third held-out set the registered slow step was 10. The step
$\eta_2=\tfrac12$ of \cref{prop:outer} was added before any two-level result was computed.

Each registered test below gives its criterion and then its outcome.
\begin{itemize}[leftmargin=*,topsep=2pt,itemsep=1pt]
  \item \emph{Grid, seeds 10 and 11} (\method{} w/o slow level): gain over reset on all eight streams with 30 or 100 centroids,
  Spearman correlation of predicted and observed gain at least 0.8. Held (0.94).
  \item \emph{Grid, seeds 20 and 21}: within 0.5 points of \method{} w/o slow level and above reset on the same eight streams.
  Held.
  \item \emph{Oracle test} of \cref{fig:rq1}b: measured area within 1.0 point of its prediction on at least five of six
  cells, Spearman at least 0.8. Held.
  \item \emph{Confirmation segments}, the fourth held-out set (six segments, fixed segmentation): never more than 0.5
  points below the better of FIFO and reset, more accurate than all in-context baselines on four of six with the
  highest mean, never more than 0.3 points below FIFO. Held. A second backbone seed held on these six segments. On
  four segments of the third set it gave two of four where three were registered. With a second backbone seed, \method{} is the
most accurate policy on two of four held-out segments. With TabICL as the backbone and no constant changed, \method{}
is the most accurate policy on all six confirmation segments.
  \item \emph{Boundary tests} (fixed segmentation): above reset on 10 of 12 misaligned streams and on 7 of 8 shifted
  streams, each with a mean gain of one point. Both failed. The realigned test added afterwards held.
  \item \emph{Detected segmentation} (72 test streams, criteria C1--C5): gain over reset of at least one point in each
  length regime with a positive bootstrap bound (C1), recovery on the shifted grid (C2), no loss on the aligned grid
  (C3) and on the real streams (C4), and fewer calls than an archive (C5). All held. Purity: 88\% of the closes held at
  least 95\% rows of one concept, short of the registered 90\%.
  \item \emph{Pool controls}: \method{} above the archive by 0.5 points on the aligned grid failed ($+0.30$). The
  real-stream criterion held.
  \item \emph{Confirmation of detected segmentation} (nine independent confirmation segments): the one-sided 95\%
  bootstrap lower bound on the mean difference to fixed segmentation is above $-0.2$ points (H1), the corresponding
  per-segment bound against FIFO is above $-0.3$ points (H2), observed accuracy is never more than 0.5 points below
  the better of FIFO and reset (H3), and \method{} is above the pool-free replay on at least six of nine (H5). All
  four held. Being the most accurate per source and on at least six of nine against the four in-context baselines
  (H4) failed ($-0.06$ and $-0.01$ points, four of nine).
\end{itemize}

A registered reporting rule required the fixed-segmentation capacity comparison to name in the abstract any stream group with a mean gain of at most $+0.1$ points over a time-budgeted pool. The shifted-grid snapshot
comparison meets this condition and is reported there. Capacity was chosen from prefix timings. The four-expert
configuration is supplementary and exceeds the time budget. The analogous detected-segmentation reporting rule also
required naming a qualifying group in the conclusion, and no group met its trigger. Pool controls include both
segmentation variants, identified with each result.  With four experts, which take more time per batch, both pools are level with \method{} on the shifted grid, as \cref{fig:pools}a shows.  

The registration deviated in its seed use and in the stored-row form of the recall predictions, which \cref{sec:rq1} reports. Seeds 40 and 41, registered as unused for the detected-segmentation test, had been used by the oracle test. We kept the verdicts and repeated the test on unused seeds 50 to 53, where C1 held. 

The rows from ARF to Learn++.NSE in \cref{tab:main} come from a rerun of the trained learners with stored probabilities, whose mean accuracy is within 0.2 points of the first run.

\begin{figure}[!t]
  \centering
  \includegraphics[width=\columnwidth]{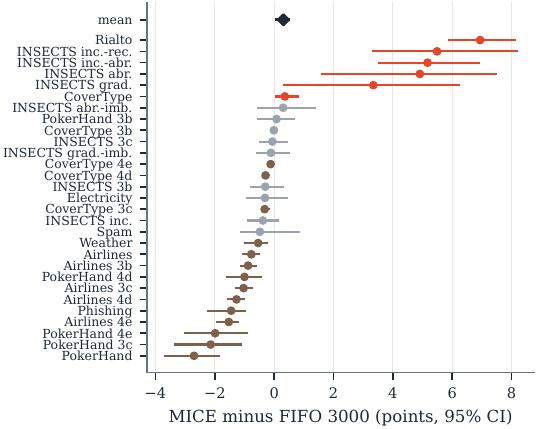}
  \caption{\method{} (fixed segments) minus a 3000-row FIFO window on the 29 real streams, with 95\% bootstrap
  intervals.}
  \Description{Per-stream accuracy differences between fixed-segmentation MICE and a 3000-row FIFO window, with bootstrap intervals.}
  \label{fig:fifo3000}
\end{figure}

\section{Context Stress Test: Protocols}
\label{app:diag}

The three parts of the context stress test of \cref{sec:diag} reuse runs of \cref{sec:exp}, and no run was added for them. 

The revisit test measures how a policy recovers when a concept returns after FIFO has discarded its rows. It uses the Sine, STAGGER and Agrawal generator streams of \cref{sec:exp} from River \citep{montiel2021river}, with blocks of 2000 rows. Each generator supplies five streams from seeds 10 to 14, with three cycles over three or four concepts. Accuracy is averaged over the five batches after a concept recurs, that is, after the
start of a block whose concept appeared in an earlier block. The policies are FIFO, reset and an oracle context of past rows of the recurring concept, which knows the concept and its boundaries.

The splice test measures what mixing concepts costs by moving the boundaries inside fixed segments. It uses the eight grid streams of \cref{app:grid} with 30 or 100 centroids and blocks of 500 or 2000 rows. These streams use seeds 20 and 21, with shifted and aligned copies compared on matched batches. Each shifted copy prepends 200 rows of the first concept, drawn from the same
centroids with a separate random generator. Every boundary therefore lies 200 rows into a 500-row segment and stays on a batch boundary. Stored contexts are formed by the fixed segmentation of \cref{sec:method}. The realigned run starts the segments at row 200, the known phase of the boundaries. The quantity is \method{} with
fixed segmentation minus reset, over all batches and over the first five batches of each recurring block. \Cref{fig:phase} gives every stream.

The curve test estimates relearning speed from learning curves measured before the grid test on data seeds 10 and 11. Each centroid count and concept supplies one block of 4000 rows, of which rows 1--3000 form the context pool and rows 3001--4000
the 1000 test rows. For $n\in\{50,100,200,500,1000,2000\}$, three contexts are drawn without replacement from the
pool, and the TFM predicts the test rows. The curve is the mean accuracy over 2 seeds, 3 concepts and 3 repetitions, as \cref{app:grid} details.

\section{Streams and Baselines}
\label{app:details}

The grid and generator streams are produced by \path{code/grid_streams.py} and \path{code/streams.py}, and long real streams are cut into segments as follows.
Real streams with more than 100{,}000 rows are cut into segments of 100{,}000 (CoverType and one INSECTS stream:
150{,}000 for the first segment).

LTM follows the buffer allocation of \citet{lourenco2026streams}, which gives its short-term buffer a share of 0.75.

\section{Proofs}
\label{app:proofs}

\subsection{Majority Purity, Detection and Segment Purity}

\emph{\Cref{prop:purity}.} With $P_X=Q_X$ the conditional of $\pi P+(1-\pi)Q$ is
$\pi\,\mathbf 1[y=g_P(\vx)]+(1-\pi)\,\mathbf 1[y=g_Q(\vx)]$. Where $g_P(\vx)=g_Q(\vx)$, its mode is $g_P(\vx)$. Where
they differ, the mode is $g_P(\vx)$ if $\pi>\tfrac12$ and $g_Q(\vx)$ if $\pi<\tfrac12$. The 0-1 risk on $P$ is therefore
$0$ or $\rho$, and $\epsilon_P(h_P)=0$, which gives the regret.

\emph{\Cref{prop:cp}.} Fix a candidate $b$ and a number $m\in\{2,\dots,R_{\mathrm{cp}}\}$ of compared batches before
it, and write the statistic of \eqref{eq:cp} without the margin as
$Z_m=\frac1{n_2}\sum_{\mathrm{post}}\mathbf 1[\text{error}]-\frac1{n_1}\sum_{\mathrm{pre}}\mathbf 1[\text{error}]$ over
the rows of the batches $b-m,\dots,b-1$ (pre, $n_1=mB$) and $b,\dots,t-1$ (post), and $\bar Z_m$ for the same sum of
conditional error rates. For fixed $b$ and $m$ these rows and the weights $1/n_1$ and $1/n_2$ do not depend on the
data. By \cref{ass:err} the error indicators minus their conditional rates, taken in row order, form a martingale
difference sequence whose terms lie in intervals of length $1/n_2$ (post) and $1/n_1$ (pre). By the
Azuma--Hoeffding inequality, $\Pr[Z_m-\bar Z_m\ge\lambda]\le\exp\bigl(-2\lambda^2/(1/n_1+1/n_2)\bigr)$, and
at $\lambda=\tau(n_1,n_2)$ this bound equals $\delta_{\mathrm{cp}}$. The reference that \eqref{eq:cp} uses starts
after the previous declared change. This start depends on earlier errors, possibly on those of the compared rows, but
it is always one of these $R_{\mathrm{cp}}-1$ fixed references, and a union bound over $m$ covers this selection.
Without a change in the batches $t-H_{\mathrm{cp}}-R_{\mathrm{cp}},\dots,t-1$, \cref{ass:err} gives
$\bar Z_m\le0$ for every $b$ and $m$. Each excess is therefore positive with probability at most
$\delta_{\mathrm{cp}}$, and a union bound over at most $H_{\mathrm{cp}}$ candidates and $R_{\mathrm{cp}}-1$
references gives the claim.

\emph{\Cref{cor:cp}.} Each end of a closed segment is a declared change, a periodic close inside one concept, or
the start of the stream. Without missed boundaries, the rows of other concepts lie between an end and the true
boundary nearest to it. These are at most $\nu$ rows at each end and hence at most $2\nu$ in all. The majority
concept therefore holds more than half of the rows when $2\nu<|S|/2$.

\subsection{Identification under a Margin}

\begin{proof}[Proof of \cref{lem:margin}]
\emph{(i)} By \cref{ass:margin},
\begin{align*}
  w_{k^\star}&=\Bigl(1+\sum_{k\ne k^\star}e^{-\eta(\ell_k-\ell_{k^\star})}\Bigr)^{-1}\\
  &\ge\bigl(1+(K-1)e^{-\eta g}\bigr)^{-1}=e^{-\kappa}.
\end{align*}
\emph{(ii)} For a row with label $y$ and expert predictions $p_k$,
\[
  -\ln\sum_kw_kp_k(y)\;\le\;-\ln\bigl(w_{k^\star}p_{k^\star}(y)\bigr)\;\le\;-\ln p_{k^\star}(y)+\kappa .
\]
\emph{(iii)} For two classes $y_1,y_2$,
\begin{align*}
  \sum_kw_k\bigl(p_k(y_1)-p_k(y_2)\bigr)\;&\ge\;w_{k^\star}\bigl(p_{k^\star}(y_1)-p_{k^\star}(y_2)\bigr)\\
  &\quad-(1-w_{k^\star}),
\end{align*}
which is positive when the gap of expert $k^\star$ exceeds $(1-w_{k^\star})/w_{k^\star}$, a quantity at most $e^{\kappa}-1$.
\end{proof}

\begin{proof}[Proof of \cref{prop:block}]
The first batch of the block is predicted with weights from a batch of the previous concept, and its per-row
regret is at most $c$. Every later batch is predicted with weights from a labelled batch of the block on which
$k^\star$ leads by $g$. \Cref{lem:margin}(ii) therefore bounds its per-row regret by
$\ln\bigl(1+(K-1)e^{-\eta g}\bigr)\le(K-1)e^{-\eta g}$. Summing over the $J-1$ later batches gives the claim.
\end{proof}

\subsection{Recall versus Relearn}

\begin{proof}[Proof of \cref{prop:recall}]
The first batch of the block is predicted by both learners before any label of the block has arrived. Its
accuracy difference, divided by $J$, is $\zeta_{k,c}$ and is at most $1/J$ in absolute value because
accuracies lie in $[0,1]$. Before batch $j+1$ ($j=1,\dots,J-1$), the relearning context holds $\min(jB,M)$ rows of
the block. Its expected accuracy is therefore $a_k(\min(jB,M))$, whereas the stored expert has expected accuracy
$a_k(n_c)$. By \cref{lem:margin}(iii), the mixture predicts as the better of the two. Its expected accuracy is thus
the larger of the two values, and its difference to relearning is the positive part in \eqref{eq:area}. Averaging over the $J$ batches gives the area term, and adding
the first batch gives $\zeta_{k,c}$.
\end{proof}

\begin{proof}[Proof of \cref{cor:flat}]
On the first visit $n_1=0$, and every positive part vanishes because $a_k$ is non-decreasing and hence
$a_k(0)\le a_k(\min(jB,M))$. Since $a_k$ is non-decreasing and $n_c\le M$, every term of the area term is at most
$a_k(M)-a_k(B)$, which is zero for a flat curve, and the terms with $jB\ge M$ vanish. At most $M/B$ of the $J-1$
terms are non-zero, namely those with $jB<n_c\le M$. Combining this count with $|\zeta_{k,c}|\le1/J$ gives the
bound.
\end{proof}

\subsection{Any-Time Safeguard}

\begin{proof}[Proof of \cref{prop:outer}]
\emph{Exp-concavity.} The half Brier score $\ell(p)=\tfrac12\lVert p-e_y\rVert^2$ has gradient $p-e_y$ and Hessian
$I$, and $\lVert p-e_y\rVert^2\le2$ on the simplex. Hence
$\nabla^2\ell\succeq\eta_2\,\nabla\ell\nabla\ell^{\top}$ for $\eta_2\le\tfrac12$, that is, $\ell$ is
$\eta_2$-exp-concave. The mean over the rows of a batch is $\eta_2$-exp-concave as well: its Hessian is the mean
Hessian, and $\tfrac1n\sum_ig_ig_i^\top\succeq\bar g\bar g^\top$ for the row gradients $g_i$ with mean $\bar g$.
Consequently,
\begin{equation}\label{eq:expconcave}
  e^{-\eta_2\ell_t(p_t)}\;\ge\;\sum_kv_{t,k}\,e^{-\eta_2\ell_{t,k}} .
\end{equation}
\emph{Potential.} Let $W_t=\sum_ke^{-\eta_2L_{t,k}}$ and note that $W_0=K$. By \eqref{eq:outer-loss} and
\eqref{eq:outer-weight},
\[
  W_t=\Bigl(\sum_ke^{-\eta_2\gamma L_{t-1,k}}\Bigr)\sum_kv_{t,k}\,e^{-\eta_2\ell_{t,k}} .
\]
By the power-mean inequality, $\sum_kx_k^{\gamma}\le K^{1-\gamma}\bigl(\sum_kx_k\bigr)^{\gamma}$ for $x_k\ge0$ and
$\gamma\in(0,1]$. Applying it with $x_k=e^{-\eta_2L_{t-1,k}}$ and combining with \eqref{eq:expconcave} gives
\[
  \ln W_t\;\le\;(1-\gamma)\ln K+\gamma\ln W_{t-1}-\eta_2\,\ell_t(p_t).
\]
\emph{Unrolling.} Since $(1-\gamma)\sum_{t\le T}\gamma^{T-t}+\gamma^{T}=1$,
\[
  \ln W_T\;\le\;\ln K-\eta_2\sum_{t=1}^{T}\gamma^{T-t}\ell_t(p_t).
\]
On the other hand, $\ln W_T\ge-\eta_2L_{T,k}=-\eta_2\sum_{t=1}^{T}\gamma^{T-t}\ell_{t,k}$ for every $k$. Combining the
two inequalities gives \eqref{eq:outer-bound}.
\end{proof}

\section{Controlled Grid and Learning Curves}
\label{app:grid}

The grid generator fixes the input distribution and lets concepts differ only in their classes. A stream has $d=10$ features, four nominal classes and three concepts. For $m\in\{5,30,100\}$
centroids, $C_j\sim\mathcal U([0,1]^{10})$ i.i.d.\ and mixture weights $w_j=u_j/\sum_i u_i$ with $u_j\sim\mathcal U(0,1)$.
Each concept $k$ has its own class vector $\lambda_k(j)\sim\mathcal U\{0,1,2,3\}$, drawn i.i.d.\ over centroids and
independently across concepts. A row of concept $k$ draws $j\sim\mathrm{Categorical}(w)$,
$\vx=C_j+\mathcal N(0,0.15^2 I_{10})$ and $y=\lambda_k(j)$. Concepts appear in the order $1,2,3$ in blocks of
$L\in\{500,2000\}$ rows for three cycles. The input distribution $P_X$ is the same for all concepts because all of them share $C$, $w$ and the noise.
One NumPy generator per data seed draws $C$, $w$ and $\lambda$ first and then the rows.

Because the class vectors are drawn independently, $\lambda_k$ is not a permutation and two concepts may agree on a
centroid. On our seeds the weight of such centroids is 18\% to 48\% averaged over the three pairs of concepts and
3\% to 76\% for single pairs. With five centroids a concept can also lack some classes, with two to four present.
The generator applies no rejection or coverage constraint, and class priors follow from $w$ and $\lambda_k$.

The data seeds are allocated as follows. Seeds 0 and 1 serve development, including that of the
change detector, and seeds 10 and 11 the registered test. Seeds 20 and 21 serve confirmation, 30 and 31 the
misaligned schedules, and 40 and 41 the oracle test. Seeds 40 to 43 serve the test of detected segmentation and share seeds 40 and 41 with the
oracle test. Seeds 50 to 53 serve its confirmation. The shifted streams reuse seeds 20 and 21 and draw their prefix from a separate
generator. The streams of \cref{fig:segments} prepend rows of the first concept from a separate generator and also
use blocks of lengths drawn uniformly from 500 to 2\,500 rows.

\begin{table}[t]
  \centering
  \caption{Controlled grid: parameters.}
  \label{tab:gridparams}
  \small
  \resizebox{\ifdim\width>\columnwidth\columnwidth\else\width\fi}{!}{%
  \begin{tabular}{ll}
    \toprule[1pt]
    Features, classes, concepts & $d=10$, 4, 3 \\
    Centroids $m$ & 5, 30, 100 with $C_j\sim\mathcal U([0,1]^{10})$ \\
    Weights & $u_j\sim\mathcal U(0,1)$, normalised \\
    Class of centroid $j$ in concept $k$ & $\lambda_k(j)\sim\mathcal U\{0,\dots,3\}$, independent \\
    Noise & isotropic Gaussian, s.d.\ 0.15 \\
    Block length $L$, cycles & 500 or 2000 rows, 3 \\
    Batch size, $M$ & 100, 1000 rows \\
    Segments & 300--700 rows (detected), 500 rows (fixed) \\
    \bottomrule[1pt]
  \end{tabular}}
\end{table}

The learning curves that supply the recall predictions were measured before the grid test, for data seeds 10 and 11. For each $m$ and concept, one block of 4000 rows of that concept comes from the generator above. Rows 1--3000 form the context pool and rows 3001--4000
the 1000 test rows, disjoint. For $n\in\{50,100,200,500,1000,2000\}$, three contexts are drawn without replacement from
the pool (generator seeds 0, 1, 2). Contexts of different sizes and repetitions therefore overlap, while the same 1000
test rows are used throughout. The frozen TabPFN v2 with four estimators and random state equal to the data seed
predicts the test rows, whereas stream evaluation uses backbone random state~0. The curve $a(n)$ is the mean accuracy over 2 seeds, 3 concepts and 3 repetitions (18 measurements per $n$, not
18 independent concept draws), interpolated linearly in $\log n$.

Predicted recall gains come from inserting the averaged curve into the area term of \cref{eq:area}, which measures the accuracy that a stored expert saves over relearning. Visits 2 and 3 of a concept recur and enter with weight $\tfrac13$ each. The first visit has no stored expert and contributes zero. The registered predictions used $M$ stored rows on every recurrence, and \cref{sec:rq1} reports the form with
$n_c=\min(M,(c-1)L)$. The observed gain of a cell is the mean over its two streams, and the Spearman correlations are
over the six cells.

The curves of seeds 10 and 11 share randomness with the evaluated streams of these seeds, because their data were generated by re-initialising the generator of the same seed with one 4000-row block per concept. The shared part is the concept definitions of the evaluated seed-10/11 streams, the centroid draws of their first
block, and some bit-identical feature vectors (0 to 1887 per stream). These identical rows arise in runs where the
two draw sequences of the same seed realign at a constant offset. The seed-20/21 streams, by contrast, share nothing
with the curves.

To remove the shared rows, a sensitivity analysis registered after this overlap was found remeasured the curves on
rows drawn, given the same concept definitions, from one independent NumPy generator per seed, centroid count and
concept. Each generator is seeded with $(20\,000+\text{seed},m,k)$, and no redrawn row is in common with any
evaluated stream. These counts and checks are produced by \texttt{code/grid\_overlap\_check.py}. The redraw changes
the curves by at most 0.015 at any $n$ and the predictions by at most 0.6 points. For 100 centroids and $L=500$, the
prediction is 9.9 instead of the registered 10.4, and 8.9 instead of 9.2 with $n_c$. The rank correlations are
unchanged at 0.94 on seeds 10 and 11 and 0.83 on seeds 20 and 21.

\section{Per-Stream Results and Resources}
\label{app:perstream}

\Cref{fig:realdiff} gives the accuracy of every policy minus FIFO on each real stream.

Per-stream accuracy comparisons show how the effects of pool organisation and a larger FIFO window vary across streams. \Cref{fig:poolctl} shows the pool controls of
\cref{sec:ablation} per stream, and \cref{fig:fifo3000} the comparison with a 3000-row window.

\begin{figure*}[t]
  \centering
  \includegraphics[width=\textwidth]{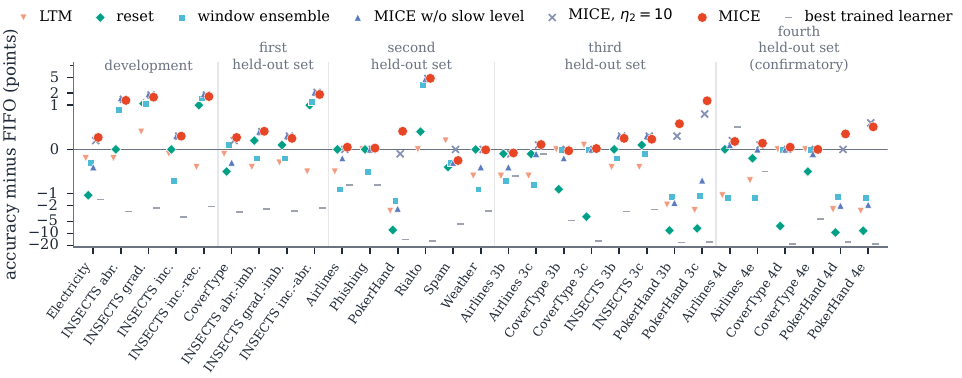}
  \caption{Accuracy minus FIFO on each real stream, grouped by evaluation stage (fixed segments).}
  \Description{Per-stream accuracy differences of context policies relative to FIFO, grouped by evaluation stage.}
  \label{fig:realdiff}
\end{figure*}

\begin{figure*}[t]
  \centering
  \includegraphics[width=\textwidth]{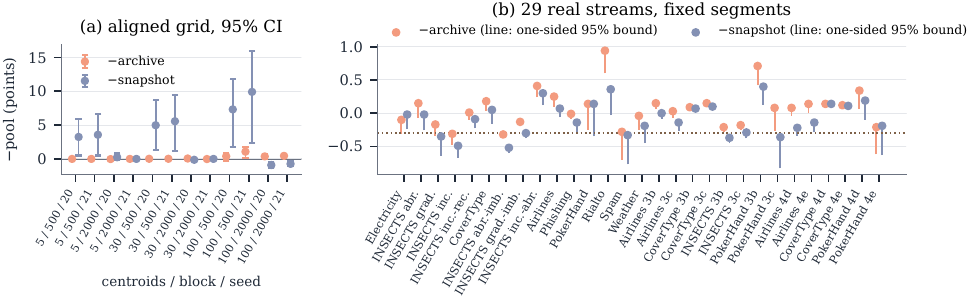}
  \caption{\method{} minus the archive and the snapshot pool of size 12, fixed segments. (a) Aligned grid. (b) The 29
  real streams, with a dotted line at $-0.3$ points.}
  \Description{Per-stream accuracy differences between MICE and unmerged archive or snapshot context pools.}
  \label{fig:poolctl}
\end{figure*}

\begin{figure*}[t]
  \centering
  \includegraphics[width=\textwidth]{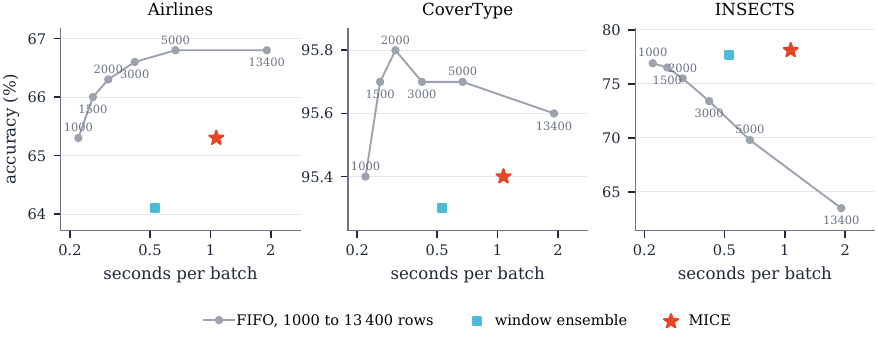}
  \caption{Accuracy against time per batch on three real prefixes for FIFO windows of 1000 to 13\,400 rows, the window
  ensemble and \method{} (fixed segments). Peak GPU memory: 0.8 to 26.4\,GB for FIFO, 1.0\,GB for \method{}.}
  \Description{Accuracy versus time per batch for different FIFO window sizes, the window ensemble and MICE on three real-stream prefixes.}
  \label{fig:cost}
\end{figure*}

\end{document}